\newcommand{\medtit}[1]{\medbreak\noindent\textbf{#1}}
\pgfplotsset{compat=1.17}
\definecolor{gold}{RGB}{255,215,0}
\definecolor{silver}{RGB}{192,192,192}
\definecolor{bronze}{RGB}{205,127,50}
\definecolor{PosColor}{RGB}{0,112,192}
\definecolor{NegColor}{RGB}{192,0,0}   
\newcommand{\HEXAND}[2]{%
    \includegraphics[width=0.2\linewidth]{figures/#1/base/img_#2.jpg} &
    \includegraphics[width=0.2\linewidth]{figures/#1/attend_and_excite/img_#2.jpg} &
    \includegraphics[width=0.2\linewidth]{figures/#1/conform/img_#2.jpg} &
    \includegraphics[width=0.2\linewidth]{figures/#1/divide_and_bind/img_#2.jpg} &
    \includegraphics[width=0.2\linewidth]{figures/#1/scg/img_#2.jpg} &
    \includegraphics[width=0.2\linewidth]{figures/#1/focus/img_#2.jpg}
    \vspace{-0.5mm}
}
\newcommand{\QUINTAND}[2]{%
    \includegraphics[width=0.2\linewidth]{figures/#1/base/img_#2.jpg} &
    \includegraphics[width=0.2\linewidth]{figures/#1/attend_and_excite/img_#2.jpg} &
    \includegraphics[width=0.2\linewidth]{figures/#1/conform/img_#2.jpg} &
    \includegraphics[width=0.2\linewidth]{figures/#1/divide_and_bind/img_#2.jpg} &
    \includegraphics[width=0.2\linewidth]{figures/#1/focus/img_#2.jpg}
    \vspace{-0.5mm}
}
\newcommand{\TRIPLE}[2]{%
    \includegraphics[width=0.32\linewidth]{figures/#1/base/img_#2.jpg} &
    \includegraphics[width=0.32\linewidth]{figures/#1/scg/img_#2.jpg} &
    \includegraphics[width=0.32\linewidth]{figures/#1/focus/img_#2.jpg} &
    \vspace{-0.5mm}
}
\def\eqref#1{equation~\ref{#1}}
\def\1{\bm{1}}
\def\vm{{\bm{m}}}
\def\vp{{\bm{p}}}
\def\vq{{\bm{q}}}
\def\vs{{\bm{s}}}
\def\evp{{p}}
\def\evq{{q}}
\def\mI{{\bm{I}}}
\DeclareMathAlphabet{\mathsfit}{\encodingdefault}{\sfdefault}{m}{sl}
\SetMathAlphabet{\mathsfit}{bold}{\encodingdefault}{\sfdefault}{bx}{n}
\newcommand{\E}{\mathbb{E}}
\newcommand{\R}{\mathbb{R}}
\theoremstyle{plain}
\newtheorem{theorem}{Theorem}[section]
\newtheorem{lemma}[theorem]{Lemma}
\theoremstyle{definition}
\theoremstyle{remark}
\definecolor{cvprblue}{rgb}{0.21,0.49,0.74}
\title{FOCUS:\\Optimal Control for Multi-Entity World Modeling in Text-to-Image Generation}
\author{\vspace{1mm}
    Eric Tillmann Bill
    \hspace{0.5cm}
    Enis Simsar
    \hspace{0.5cm}
    Thomas Hofmann
    \\
    ETH Zürich\\
    {\tt\small \href{https://ericbill21.github.io/FOCUS/}{\textbf{ericbill21.github.io/FOCUS/}}}
}
\begin{document}

\twocolumn[{%

\renewcommand\twocolumn[1][]{#1}%
\maketitle

\begin{center}
\vspace{-2mm}
\begingroup
\setlength{\parskip}{0pt}\setlength{\parindent}{0pt}

\newlength{\midsep}
\setlength{\midsep}{3mm}

\noindent
\makebox[\dimexpr(\linewidth-2\midsep)/2\relax][l]{\large\textbf{Base Models}}%
\hspace{\midsep}%
\makebox[\dimexpr(\linewidth-2\midsep)/2\relax][l]{\large\textbf{Base Models + FOCUS (Ours)}}%
\\

\includegraphics[width=\dimexpr(\linewidth-\midsep)/4\relax]{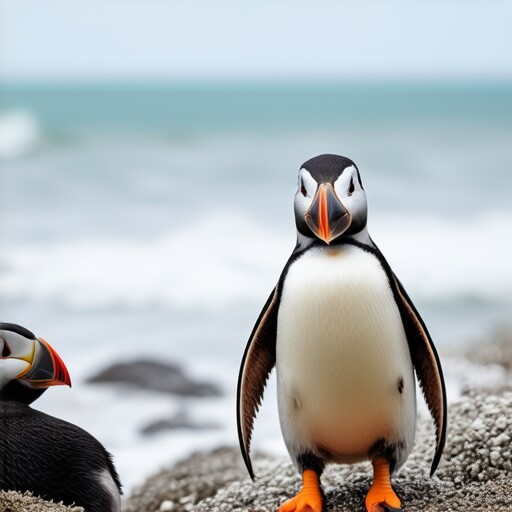}%
\includegraphics[width=\dimexpr(\linewidth-\midsep)/4\relax]{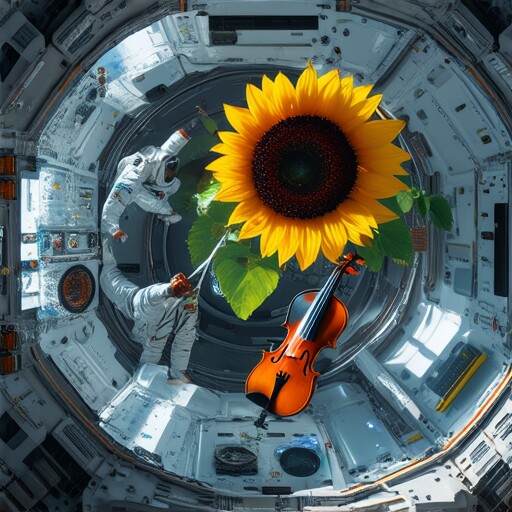}%
\hspace{\midsep}%
\includegraphics[width=\dimexpr(\linewidth-\midsep)/4\relax]{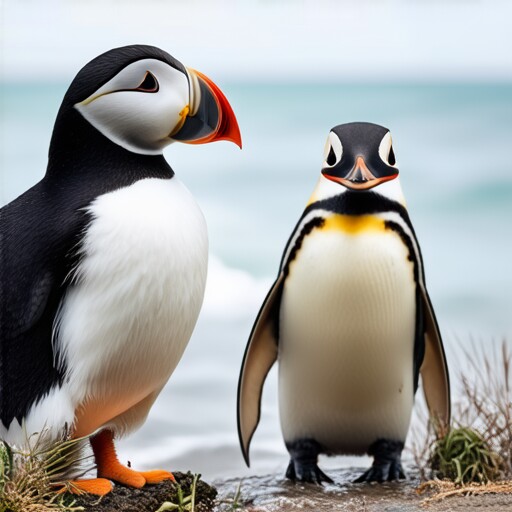}%
\includegraphics[width=\dimexpr(\linewidth-\midsep)/4\relax]{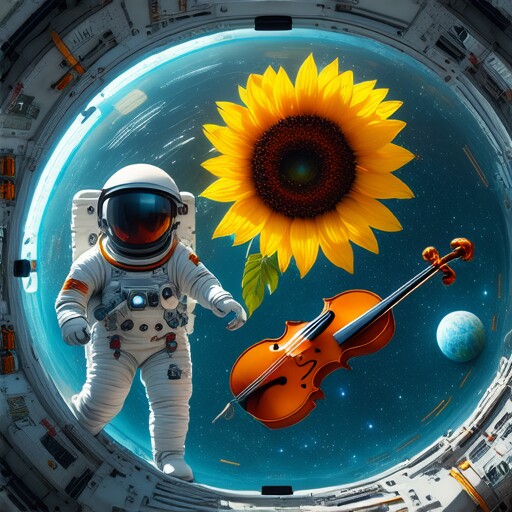}%
\\\vspace{-0.5mm}

\includegraphics[width=\dimexpr(\linewidth-\midsep)/6\relax]{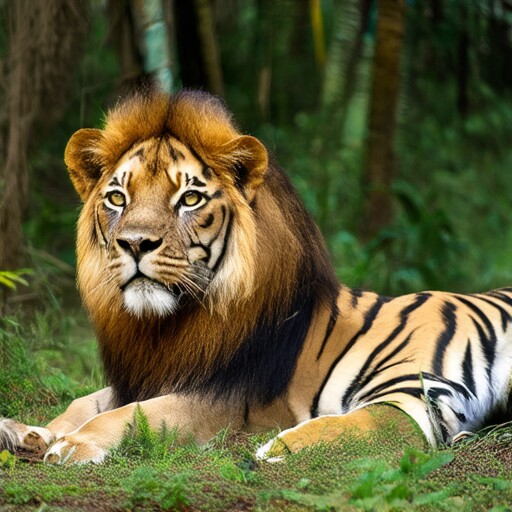}%
\includegraphics[width=\dimexpr(\linewidth-\midsep)/6\relax]{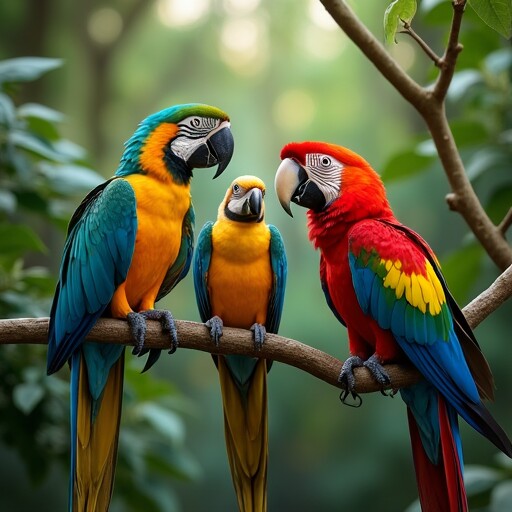}%
\includegraphics[width=\dimexpr(\linewidth-\midsep)/6\relax]{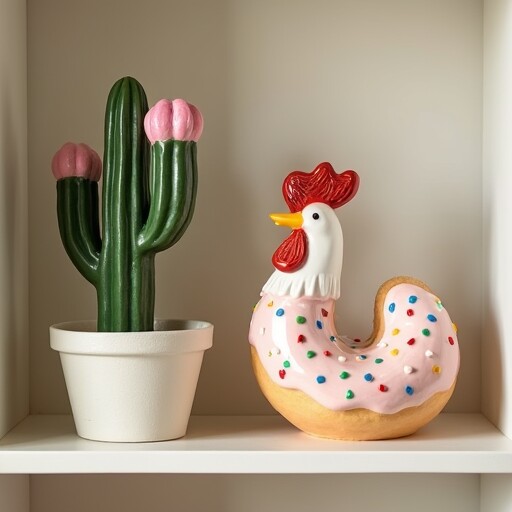}%
\hspace{\midsep}%
\includegraphics[width=\dimexpr(\linewidth-\midsep)/6\relax]{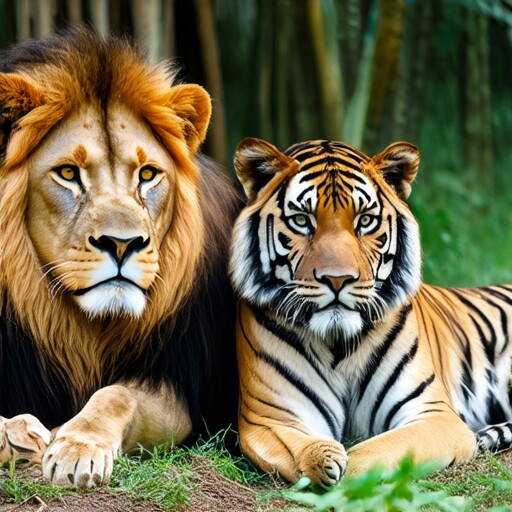}%
\includegraphics[width=\dimexpr(\linewidth-\midsep)/6\relax]{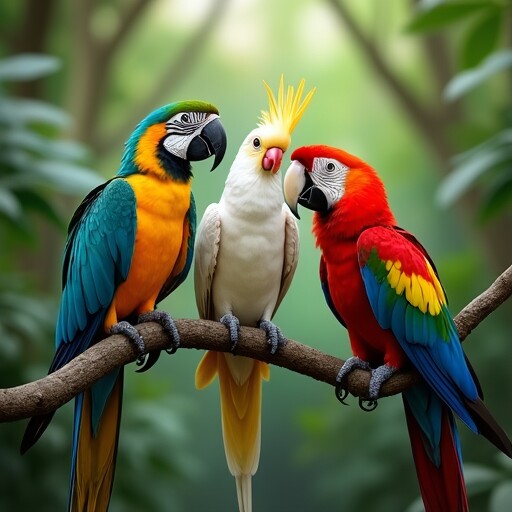}%
\includegraphics[width=\dimexpr(\linewidth-\midsep)/6\relax]{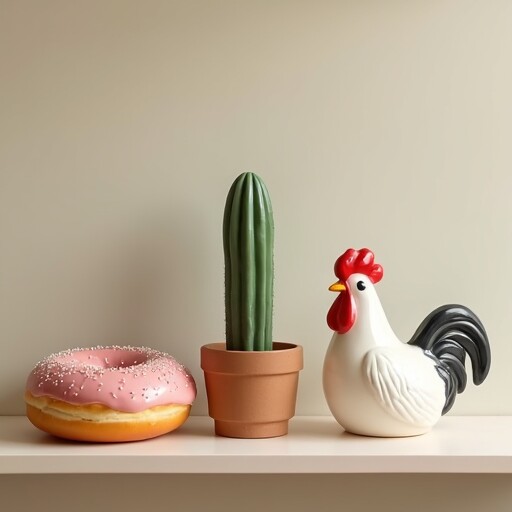}%
\\\vspace{-0.5mm}


\includegraphics[width=\dimexpr(\linewidth-1.1\midsep)/8\relax]{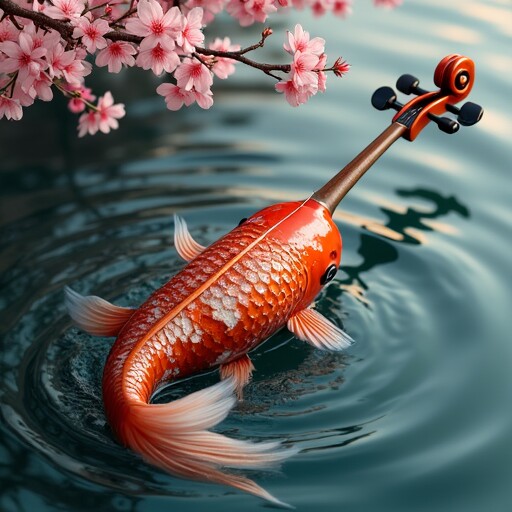}%
\includegraphics[width=\dimexpr(\linewidth-1.1\midsep)/8\relax]{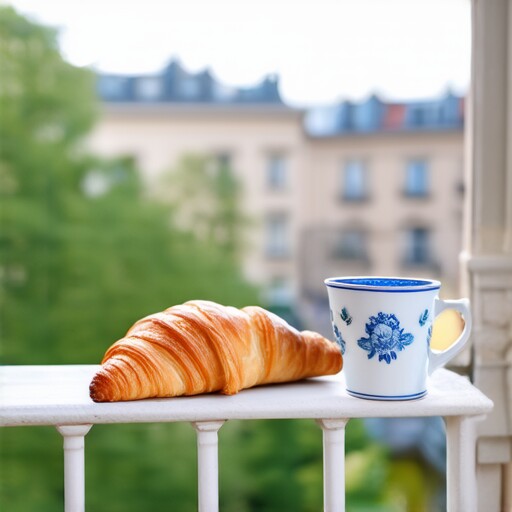}%
\includegraphics[width=\dimexpr(\linewidth-1.1\midsep)/8\relax]{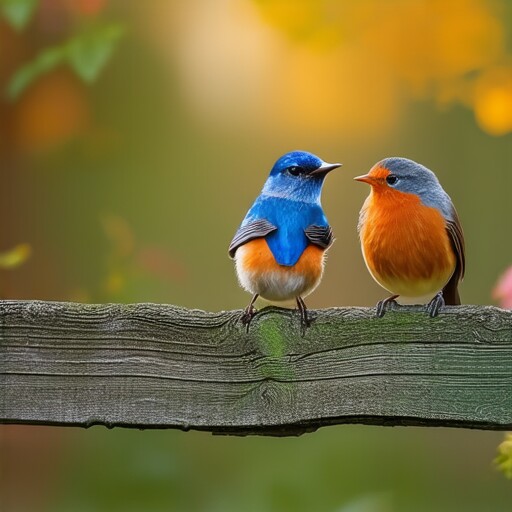}%
\includegraphics[width=\dimexpr(\linewidth-1.1\midsep)/8\relax]{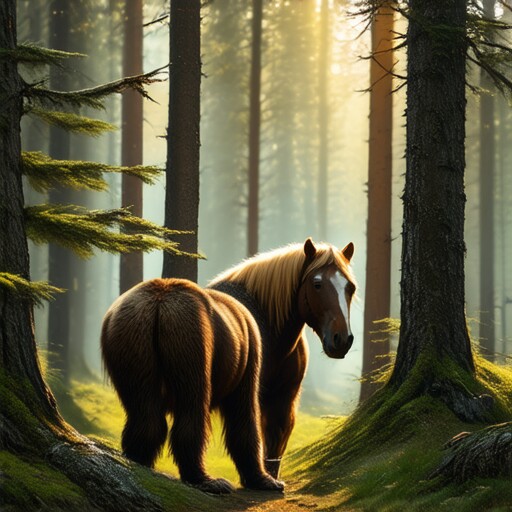}%
\hspace{\midsep}%
\includegraphics[width=\dimexpr(\linewidth-1.1\midsep)/8\relax]{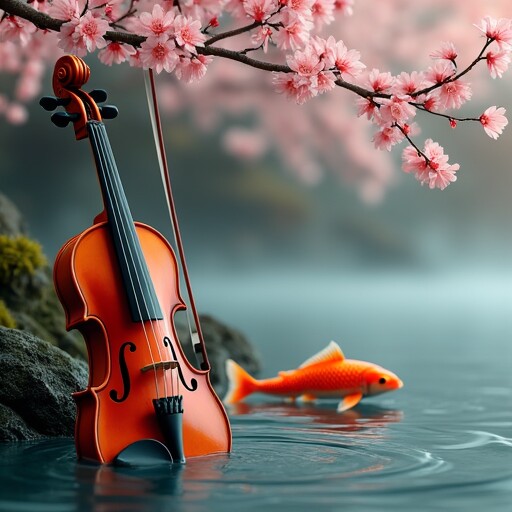}%
\includegraphics[width=\dimexpr(\linewidth-1.1\midsep)/8\relax]{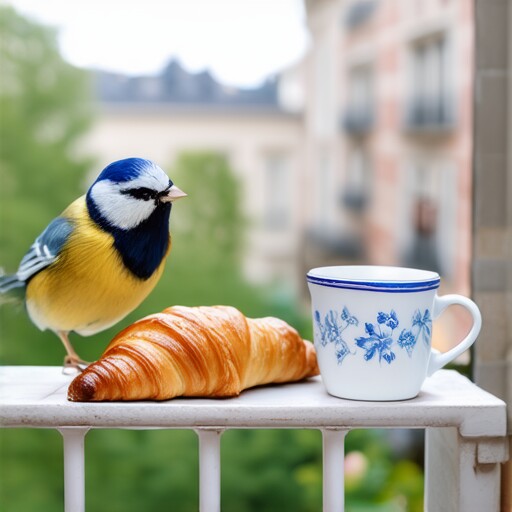}%
\includegraphics[width=\dimexpr(\linewidth-1.1\midsep)/8\relax]{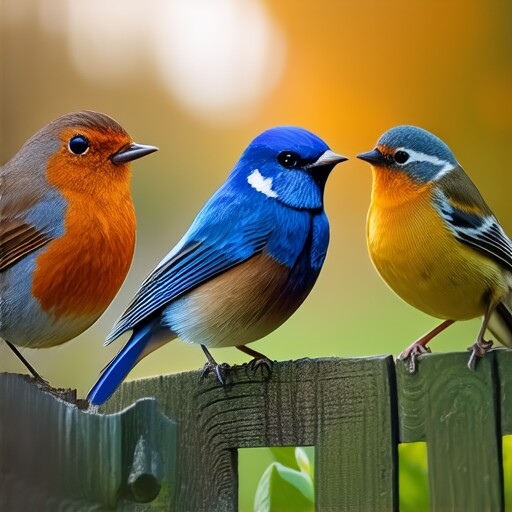}%
\includegraphics[width=\dimexpr(\linewidth-1.1\midsep)/8\relax]{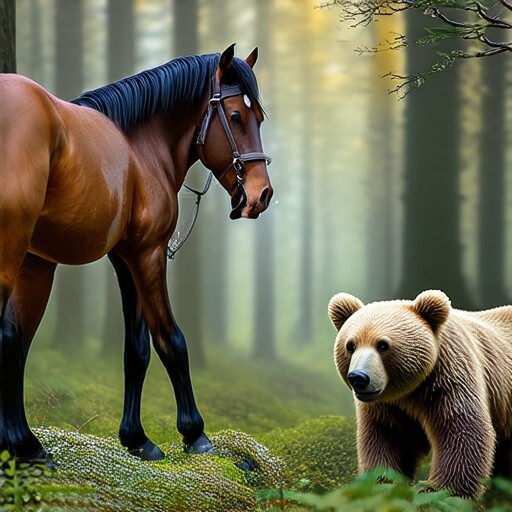}%
\endgroup
\vspace{-1mm}

\captionof{figure}{\textbf{Optimal control improves coherent multi-subject generation in flow matching models.} Using \textsc{FOCUS} at test time or via fine-tuning yields faithful compositions with correct attribute binding, minimal leakage, and no omissions, while preserving base style.}
\label{fig:teaser}
\vspace{2mm}
\end{center}
}]

\begin{abstract}\setlength{\parskip}{0pt}
Text-to-image (T2I) models excel on single-entity prompts but struggle with multi-entity scenes, often exhibiting attribute leakage, identity entanglement, and subject omissions. We present a principled theoretical framework that steers sampling toward multi-subject fidelity by casting flow matching (FM) as stochastic optimal control (SOC), yielding a single hyperparameter controlled trade-off between fidelity and object-centric state separation / binding consistency. Within this framework, we derive two architecture-agnostic algorithms: (i) a training-free test-time controller that perturbs the base velocity with a single-pass update, and (ii) \emph{Adjoint Matching}, a lightweight fine-tuning rule that regresses a control network to a backward adjoint signal. The same formulation unifies prior attention heuristics, extends to diffusion models via a flow–diffusion correspondence, and provides the first fine-tuning route explicitly designed for multi-subject fidelity. In addition, we also introduce \textsc{FOCUS} (Flow Optimal Control for Unentangled Subjects), a probabilistic attention-binding objective compatible with both algorithms. Empirically, on Stable Diffusion~3.5 and FLUX.1, both algorithms consistently improve multi-subject alignment while maintaining base-model style; test-time control runs efficiently on commodity GPUs, and fine-tuned models generalize to unseen prompts. 
\end{abstract}

\vspace{-3\baselineskip}
\section{Introduction}
Text-to-image (T2I) models have made substantial progress in visual fidelity and prompt adherence, yet they remain brittle on \emph{multi-subject} prompts. Typical failure modes include attribute leakage, identity entanglement, and subject omission \cite{chefer_attend-and-excite_2023,liu_compositional_2023,bar-tal_multidiffusion_2023,dahary_be_2024}. From a \emph{world modeling} perspective, these errors reflect weak object-centric representations of \emph{which entities} are present and \emph{which attributes} belong to each, so constraints become non-local (\eg{,} changing ``the red hat'' unintentionally alters another subject). This limits applications such as story illustration, multi-panel composition, and scientific communication, where identity and attribute binding are essential.

A unifying theoretical perspective on modern T2I models is \emph{flow matching} (FM), which parameterizes generation as a time-dependent flow from a base distribution to the data distribution via a learned vector field \cite{lipman_flow_2023,liu_flow_2022,albergo_stochastic_2023}. This framework encompasses recent large-scale systems such as Stable Diffusion~3.5~\cite{esser_scaling_2024}, FLUX~\cite{labs_flux1_2025}, and earlier denoising-diffusion architectures such as Stable Diffusion~1.5~\cite{rombach_high-resolution_2022} and Stable Diffusion~XL~\cite{podell_sdxl_2023}. FM thus provides a common language for studying how generative models allocate attributes to subjects and maintain scene coherence across architectural families. We leverage this common ground to analyze and improve multi-subject fidelity in FM models. 

Prior work has attempted to mitigate multi-subject failures through \emph{test-time} heuristics that manipulate cross-attention \cite{meral_conform_2023,qiu2025self} or adjust guidance \cite{feng_training-free_2023}, including token amplification \cite{chefer_attend-and-excite_2023}, constraint-based binding \cite{li_divide_2024}, and structure-aware attention editing \cite{hertz_prompt--prompt_2022,dahary_be_2024}. While often effective in specific settings, these methods are heuristic and lack a unifying optimization objective, making it unclear when and why they succeed. They also require several hyperparameters and are primarily developed for Stable Diffusion~1.x backbones, which limits their portability to modern FM models.

In this work, we connect guidance-style latent updates to \emph{stochastic optimal control} (SOC). Under standard approximations, SOC yields a gradient-style control on the sampler dynamics that is closely related to prior latent-update guidance, but additionally provides a single objective with an explicit trade-off between base fidelity and entity disentanglement. Building on this view, we propose \textsc{FOCUS} (Flow Optimal Control for Unentangled Subjects), a probabilistic attention-binding objective that measures multi-entity entanglement and encourages factorized subject representations. The resulting objective supports two practical instantiations: (i) a lightweight test-time controller that steers sampling without retraining, and (ii) a fine-tuning procedure that learns a control network via Adjoint Matching \cite{domingo-enrich_adjoint_2025}. 

Empirically, both instantiations of our objective---test-time control and fine-tuning---improve multi-subject fidelity on modern FM models (SD~3.5, FLUX) and transfer to older diffusion backbone (SD~1.5) under matched compute budgets. Test-time control provides consistent improvements with low overhead on commodity GPUs, while fine-tuning learns an amortized controller that maintains gains on unseen prompts. We validate these trends with automatic metrics and a human preference study.

In summary, our contributions are threefold:
\begin{enumerate}[label=(\roman*),leftmargin=0.75cm]
    \item \textbf{\textsc{FOCUS}:} a divergence-based, probabilistic attention-binding objective for multi-subject disentanglement.
    \item \textbf{Two instantiations:} (a) a training-free test-time controller and (b) an Adjoint Matching \cite{domingo-enrich_adjoint_2025} fine-tuning route that learns a lightweight control network.
    \item \textbf{SOC perspective:} an SOC interpretation of latent-update guidance for FM samplers, exposing an explicit trade-off knob and improving portability across architectures.
\end{enumerate}
We provide our code, curated dataset, and model checkpoints at {\tt\href{https://ericbill21.github.io/FOCUS/}{ericbill21.github.io/FOCUS/}}.
\section{Preliminaries}
Flow Matching (FM) \cite{lipman_flow_2023,liu_flow_2022,albergo_stochastic_2023} trains a time–dependent vector field $v_\theta:\R^d\times[0,1]\to\R^d$ that transports a base distribution $\pi_0$ (\eg{,} $\mathcal N(0,I)$) to a target distribution $\pi_1$ (\eg{,} $P_\text{data}$). Given a \emph{reference path} $(\overline{X}_t)_{t\in[0,1]}$ with $\overline{X}_0 \sim \pi_0$ and $\overline{X}_1 \sim \pi_1$, FM regresses the \emph{conditional velocity}
\begin{align}
    u_t(\overline{X}_t \mid \overline{X}_0,\overline X_1):=\frac{d}{dt}\overline{X}_t
\end{align}
so that $v_\theta(x,t)$ matches $\E[u_t\mid \overline X_t=x]$ \cite{lipman_flow_2023}.

\medtit{Reference paths.} A standard choice for reference paths is linear interpolation. For given $\overline{X}_0, \overline{X}_1$ we define $\overline{X}_t$ as
\begin{align*}
    \overline{X}_t = \beta_t \overline{X}_0 + \alpha_t \overline{X}_1,\quad\text{s.t.}\quad \alpha_0=\beta_1=0, \alpha_1=\beta_0=1
\end{align*}    
where $(\alpha_t, \beta_t)_{t\in[0,1]}$ is a differentiable scheduler with $\alpha_t$ strictly increasing, and $\beta_t$ strictly decreasing. The pathwise derivative is then $u_t(\overline{X}_t \mid \overline{X}_0,\overline{X}_1)=\dot\beta_t \overline{X}_0 + \dot{\alpha}_t \overline{X}_1$.\footnote{Over-dot denotes the time derivative, \ie{,} $\dot{x}_t=\frac{d}{dt}x_t$.} A widely used instance is \emph{rectified flow} (RF), also known as optimal transport, with $\alpha_t=t$ and $\beta_t=1-t$ \cite{liu_flow_2022}.

\medtit{Training objective.}
FM is trained with the \emph{conditional flow matching} loss $\mathcal L_{\mathrm{CFM}}(\theta)$\cite{lipman_flow_2023} defined as follows
\begin{align}
    \E_{t\sim\mathcal U[0,1]}
    \E_{\substack{\overline{X}_0 \sim \pi_0\\ \overline{X}_1 \sim \pi_1}}
    \left[ \big\| u_t(\overline{X}_t \mid \overline{X}_0, \overline{X}_1) - v_\theta(\overline{X}_t,t) \big\|_2^2 \right],\nonumber
\end{align}
which regresses the pathwise velocity toward its conditional mean at uniformly sampled times.

\medtit{Sampling.}
After training, sample $X_0\sim\pi_0$ and evolve the learned flow by solving the ODE
\begin{align}
    dX_t \;=\; v_\theta(X_t,t)\,dt,
\end{align}
which produces a path $(X_t)_{t\in[0,1]}$ whose marginals match those of the reference path $(\overline{X}_t)_{t\in[0,1]}$ under standard existence–uniqueness conditions; in particular $X_1\sim\pi_1$ \cite{lipman_flow_2023}. More generally, FM admits a stochastic formulation \cite{domingo-enrich_adjoint_2025} in which the drift is augmented by a arbitrary schedule-dependent correction with diffusion coefficient $\sigma(t)\ge 0$:
\begin{align}
    d X_t = b(X_t, t) dt + \sigma(t)\, dB_t, \label{eq:sde}
\end{align}
where $(B_t)_{t\geq 0}$ is standard Brownian motion in $\R^d$, and $b(X_t,t)$ is referred to as the (base) drift, defined as
\begin{align}
    v_\theta(X_t,t) 
      + \frac{\sigma(t)^2}{2\beta_t \left( \frac{\dot{\alpha}_t}{\alpha_t}\beta_t - \dot{\beta}_t \right) }
         \left( v_\theta(X_t,t) - \frac{\dot{\alpha}_t}{\alpha_t}X_t \right).
\end{align}
Setting $\sigma\equiv 0$ recovers the deterministic ODE.

\medtit{Connection to denoising diffusion.}
Classical denoising diffusion models arise as special cases of FM when their discrete procedures are lifted to continuous time; refer to the Supplementary Material for full derivation.
\section{Methodology}
We formulate disentanglement as optimal control over flow-matching dynamics, derive test-time and fine-tuned controllers, and introduce a probabilistic loss \textsc{FOCUS}.

\subsection{Stochastic Optimal Control}
Our goal is to reduce multi-subject entanglement while remaining close to the base model. To this end, we introduce a small \emph{control}
$u:\R^d\times[0,1]\to\R^d$ into the drift and pose generation as a quadratic, control-affine SOC problem:
\begin{align}
    \min_{u\in\mathcal U} \E &\left[ \int_0^1 \frac{1}{2} \|u(X_t^{u}, t)\|_2^2 + f(X_t^{u},t) dt + g(X_1^{u})\right],\label{eq:soc}\\
    \shortintertext{s.t.}
    \begin{split}
    dX_t^{u} &= \left( b(X_t^{u},t) + \sigma(t) u(X_t^{u},t) \right) dt
    + \sigma(t) dB_t, \label{eq:soc_st}\\
    X_0^{u} &\sim \pi_0,
    \end{split}
\end{align}
where $X_t^{u}$ is the latent state, $b$ is the base FM drift, $\sigma(t)\ge 0$ is a scalar diffusion schedule, and $(B_t)_{t\in[0,1]}$ is Brownian motion. The running cost $f:\R^d\times[0,1]\to\R$ will measure subject entanglement (e.g. $f \equiv \mathrm{FOCUS}$), and we set the terminal cost $g\equiv 0$ in all derivations and experiments.

For control-affine dynamics with $\ell(x,u,t)=\tfrac{1}{2}\|u\|_2^2+f(x,t)$, the Hamiltonian $\mathcal{H}(x,u,a,t)$ of the SOC is defined as follows
\begin{align}
     \frac{1}{2} \|u\|_2^2 + f(x,t) + a^\top \left(b(x,t) + \sigma(t) u\right),
\end{align}
where $a(t) \in \R^d$ is the co-state (adjoint). Since $\mathcal{H}$ is strictly convex in $u$, the first-order optimality condition yields 
\begin{align}
    u_t^\star = -\sigma(t)a(t), \label{eq:ham_opt}
\end{align}
with adjoint dynamics
\begin{align}
\begin{split}
    \frac{d}{dt} a(t) &= -\left[ \nabla_X b(X_t^{u},t)^\top a(t) + \nabla_X f(X_t^{u}, t)\right],\\
    a(1)&=\nabla_X g(X_1^{u}). \label{eq:adjoint}
\end{split}
\end{align}

\subsection{On-the-fly disentanglement (test-time control)}
At inference, we solve \cref{eq:soc} \emph{per trajectory} with frozen model parameters. The idea is to compute $u^\star_t$ on-the-fly and steer the sampling process at each timestep $t$. Directly computing $u_t^\star$ requires the adjoint state $a(t)$ in \cref{eq:ham_opt}, which is defined along the \emph{controlled} path via \cref{eq:adjoint}. This is impractical because $a(t)$ depends on the terminal condition $a(1)=\nabla_X g(X^{u}_1)$, which depends on the endpoint $X^{u}_1$, which in turn depends on the future segment $(X_\tau^{u})_{\tau\in[t,1]}$ ; coupling a backward solve to the forward pass at every step.
To obtain a \emph{single-pass} controller, we approximate $a(t)$ locally at the current state. Concretely, we linearize \cref{eq:adjoint} around $X_t^{u}$, freeze $\nabla_X b\approx 0$, and treat the future state as locally constant:
\begin{align}
    a(t) \approx \int_t^1 \nabla_X f(X_t^u, \tau) d\tau \approx (1-t) \nabla_X f(X_t^u, t),
\end{align}
where the last step uses a left-Riemann approximation. Substituting into \cref{eq:ham_opt} yields the instantaneous control
\begin{align}
    u^\star_t \approx -\sigma(t) (1-t) \nabla_X f(X_t^u, t).
\end{align}
The approximation $\nabla_X b\approx 0$ is common in online control settings \cite{havens_adjoint_2025}.

\medtit{Velocity reparameterization.}
Let $v_{\text{base}}$ denote the base FM velocity parameterization. For fine-tuning we adopt the \emph{memoryless} diffusion schedule of \citet{domingo-enrich_adjoint_2025}, which makes the stochastic interpolant endpoints independent $X_0 \perp X_1$ and yields the closed-form diffusion coefficient
\begin{align}
    \sigma_{\mathrm{mem}}(t)
    = \sqrt{2\,\beta_t\!\left(\frac{\dot\alpha_t}{\alpha_t}\beta_t - \dot\beta_t\right)}.
    \label{eq:mem}
\end{align}
Under this schedule, the standard drift--velocity identity for the interpolant SDE \cref{eq:sde} simplifies to
$b(X_t,t)=2v_\theta(X_t,t)-\frac{\dot\alpha_t}{\alpha_t}X_t$.
Injecting control in the drift as $b \mapsto b+\sigma_{\mathrm{mem}}(t)u_t$ therefore corresponds to a \emph{direct additive} modification of the velocity,
$v_\theta \mapsto v_\theta + \tfrac{1}{2}\sigma_{\mathrm{mem}}(t)u_t$.
Consequently, the controlled velocity passed to the sampler is
\begin{align}
    v_t^\star
    &= v_{\text{base}}(X_t,t) + \frac{\sigma_{\mathrm{mem}}(t)}{2}\,u_t^\star \\
    &\approx v_{\text{base}}(X_t,t) - \underbrace{\frac{\sigma_{\mathrm{mem}}^2(t)}{2}(1-t)}_{\eta_{\mathrm{mem}}(t)}\,\nabla_X f(X_t,t), \label{eq:sde_update}
\end{align}
\ie{,} a standard latent-gradient guidance term with an explicit time-dependent step size $\eta_{\mathrm{mem}}(t)$ implied by the SOC approximation. This form can be plugged into any off-the-shelf SDE solver without modifying the integrator.


\medtit{Deterministic sampling at test time.}
At inference, we can apply the same guidance term with deterministic ODE samplers commonly used in T2I systems ($\sigma \equiv 0$). In this case, the controlled velocity reduces to
\begin{align}
    v_t^\star \;=\; v_{\text{base}}(X_t,t) - \eta(t)\,\nabla_X f(X_t,t), \label{eq:ode_update}
\end{align}
with the same objective $f$ (\eg{,} \textsc{FOCUS}) and time schedule $\eta(t)$ as in the stochastic setting. Full derivations are provided in the Supplementary Material.

\subsection{Fine-tuning (amortizing test-time guidance)}
Our goal is to learn a lightweight control network $u_\theta(X_t,t)$ that \emph{amortizes} the per-step guidance signal used at test time, so that disentanglement can be obtained with a single forward pass and reduced latency.

\medtit{Training recipe.}
We adopt \emph{Adjoint Matching} (AM) from \citet{domingo-enrich_adjoint_2025} as an efficient way to train the control network $u_\theta$.
Directly solving \cref{eq:adjoint} during training is prohibitive because the adjoint $a(t)$ depends on the controlled path $X_t^u$. AM avoids differentiating through controlled trajectories by regressing $u_\theta$ to a cheaper \emph{lean adjoint} $\tilde a(t)$ computed along \emph{frozen} forward trajectories $(X_t)_{t\in[0,1]}$, while dropping $u$-dependent Jacobian terms:
\begin{align}
    \begin{split}
        \frac{d}{dt} \tilde{a}(t) &= -\left[\nabla_X b(X_t,t)^\top \tilde{a}(t) + \nabla_X f(X_t,t)\right], \\
        \tilde{a}(1)&=\nabla_X g(X_1). \label{eq:lean_adjoint}
    \end{split}
\end{align}
This produces a training target aligned with the SOC optimal control $u_t^\star=-\sigma(t)a(t)$, with $a(t)$ replaced by the tractable surrogate $\tilde a(t)$.

\medtit{Memoryless training.}
To improve generalization beyond the specific sampled trajectories and stabilize the regression target, we utilize the \emph{memoryless} schedule $\sigma_{\mathrm{mem}}$, which makes the endpoints independent by design $X_0 \perp X_1$ \cite{domingo-enrich_adjoint_2025}.

\medtit{Training objective.}
Each iteration proceeds as follows: (i) sample forward trajectories $(X_t)_{t\in[0,1]}$ under $\sigma_{\mathrm{mem}}$ with the base model frozen via \cref{eq:sde}; (ii) integrate the lean adjoint $(\tilde a(t))_{t\in[0,1]}$ backwards with \cref{eq:lean_adjoint}; (iii) regress the control toward the stationary target $-\sigma_{\mathrm{mem}}(t)\tilde a(t)$ by minimizing
\begin{align}
        \mathcal{L}_\mathrm{AM}(\theta) := \frac{1}{2} \int_0^1 \|u_\theta(X_t, t) + \sigma_{\mathrm{mem}}(t)\tilde{a}(t)\|^2_2 dt.
\end{align}
The memoryless schedule is only required during \emph{fine-tuning}. At inference, $\sigma$ can be set to $0$, enabling fast off-the-shelf ODE samplers while applying the learned controller.

\subsection{Loss for Multi-subject Disentanglement}
\begin{figure}
    \centering
    \begin{subfigure}[t]{0.3\linewidth}
        \centering
        \captionsetup{labelformat=empty}
        \caption{\textbf{Dachshund}}
        \includegraphics[width=\linewidth]{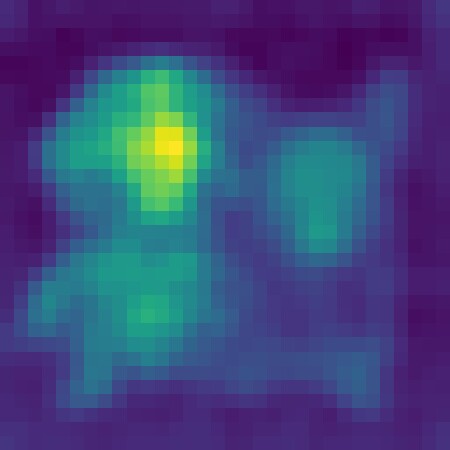}
    \end{subfigure}
    \hfill
    \begin{subfigure}[t]{0.3\linewidth}
        \centering
        \captionsetup{labelformat=empty}
        \caption{\textbf{Corgi}}
        \includegraphics[width=\linewidth]{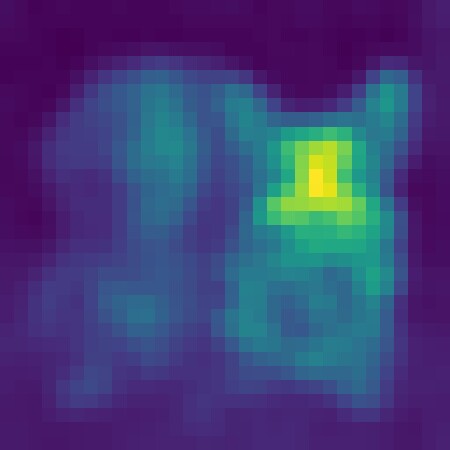}
    \end{subfigure}
    \hfill
    \begin{subfigure}[t]{0.3\linewidth}
        \centering
        \captionsetup{labelformat=empty}
        \caption{\textbf{Image}}
        \includegraphics[width=\linewidth]{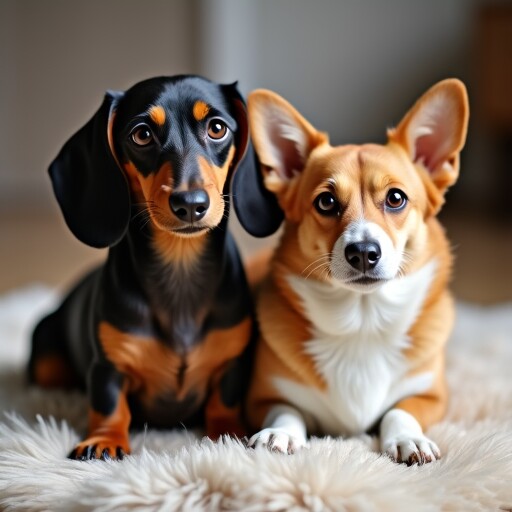}
    \end{subfigure}\\
    \vspace{1mm}
    \textit{\small ``A dachshund and a corgi sitting together on a cozy rug''}
    \vspace{-2mm}
    \caption{Extracted cross-attention maps from FLUX.1 [dev].}
    \label{fig:attn_example}
\end{figure}

To instantiate the running cost $f$ in our SOC objective, we leverage the fact that T2I backbones compute cross-attention from image-space queries to text tokens. Empirically, these token-wise cross-attention maps correlate with the eventual spatial placement of the corresponding entities~\cite{hertz_prompt--prompt_2022,chefer_attend-and-excite_2023}. This provides a natural signal to diagnose and mitigate subject entanglement \emph{during} generation by measuring spatial interactions among \emph{subject-specific} attention maps; see \cref{fig:attn_example} for an example of extracted attention maps.

\begin{figure*}[t]
      \centering
      {\setlength{\tabcolsep}{.3mm}%
      \resizebox{\linewidth}{!}{%
        \begin{tabular}{@{}ccccccc@{}}%
            & \textbf{Base} & \textbf{Attend\&Excite} & \textbf{CONFORM} & \textbf{Divide\&Bind} & \textbf{Self-Cross Guidance} & \textbf{\textsc{FOCUS} (Ours)} \\
            \rotatebox{90}{\parbox{0.2\linewidth}{\centering \textbf{SD 3.5}}} & \HEXAND{SD3-OTF}{3}\\
            & \multicolumn{6}{c}{\textit{\small``An astronaut, a violin, and a sunflower floating inside a space station''}}\vspace{1mm}\\
            \rotatebox{90}{\parbox{0.2\linewidth}{\centering \textbf{FLUX.1}}} & \HEXAND{FLUX-OTF}{1}\\
            & \multicolumn{6}{c}{\textit{\small``A swan, a goose, and a duck drifting past lily pads''}}\\
        \end{tabular}%
    }}
    \caption{Qualitative results with test-time control on Stable Diffusion 3.5 and FLUX.1. The sampling process is consistent for all heuristics and each heuristic is shown for its optimal $\lambda$ scale. Additional examples are provided in the Supplementary Material.}
    \label{fig:otf}\vspace{-4mm}
\end{figure*}

\medtit{Attention as a probabilistic signal.}
Existing attention-based methods \cite{meral_conform_2023,qiu2025self,chefer_attend-and-excite_2023,li_divide_2024} typically define heuristic costs on cross-attention to encourage subject separation, \eg{,} cosine-similarity or activation differences (see \cref{sec:related}). In all modern T2I backbones, however, each cross-attention map arises from a softmax and thus forms a \emph{probability distribution} over spatial locations. Our SOC framework therefore treats the collection of attention maps as a set of spatial distributions and defines the running cost $f$ directly on these distributions.

\medtit{\textsc{FOCUS}.}
We require an entanglement loss that operates on attention as probability distributions, scales to an arbitrary number of subjects, and captures both \emph{within-subject} consistency and \emph{between-subject} separation. This naturally suggests a symmetric divergence over sets of distributions; we use the Jensen--Shannon divergence, which meets these criteria and admits a simple normalization across set sizes (see Supplementary Material).

Let $d$ denote the number of spatial locations and let $\Delta^{d-1}$ be the probability simplex. For a finite set $P=\{\vp_1, \dots, \vp_n \} \subset \Delta^{d-1}$ of distributions, define the Jensen--Shannon divergence
\begin{align*}
    D_{\mathrm{JS}}(P) = \frac{1}{n} \sum_{i=1}^n D_{\mathrm{KL}} \left( \vp_i \middle\| \vm \right),
    \qquad
    \vm = \frac{1}{n} \sum_{j=1}^n \vp_j,
\end{align*}
with $D_{\mathrm{KL}}(\vp \| \vq) = \sum_{i=1}^d \evp_i \log \frac{\evp_i}{\evq_i}$ being the Kullback-Leibler divergence. Since $D_{\mathrm{JS}}(P) \in [0,\log n]$, we normalize by dividing with $\log n$ to obtain $\widehat{D}_{\mathrm{JS}}(P) \in [0,1]$, which makes scores comparable across different set sizes. We provide proof of this upper bound and more insights into this particular choice in the Supplementary Material.

We introduce \textsc{FOCUS} (Flow Optimal Control for Unentangled Subjects) to encourage, for each subject, \emph{unimodal, spatially localized, and nonoverlapping} attention. Let $S$ be the set of subjects in the prompt. For each subject $\vs \in S$, collect its attention maps at the current step into $P_\vs \subset \Delta^{d-1}$ (\eg{,} across layers or heads), and define the subject mean $\vm_\vs = \frac{1}{|P_\vs|} \sum_{\vp \in P_\vs} \vp$. Let $M=\{\vm_\vs \mid \vs \in S\}$ be the set of subject means. Our \textsc{FOCUS} loss combines \emph{within-subject agreement} and \emph{between-subject separation}:
\begin{align*}
    f_\textsc{FOCUS}(S) = \frac{1}{2}\left( 
        \frac{1}{|S|}\sum_{\vs \in S} \widehat{D}_\mathrm{JS}(P_\vs) \right) + \frac{1}{2}\left( 
        1 - \widehat{D}_\mathrm{JS}(M) \right)
\end{align*}
The first term penalizes dispersion within each subject’s maps (encouraging consistent binding, and for multi-encoder models such as SD~3.5, agreement across encoders). The second term rewards separation among subjects by maximizing divergence between their mean attention distributions. By construction, $\mathrm{focus}\in[0,1]$: $0$ indicates perfect disentanglement (low intra-subject dispersion and maximal inter-subject separation), while larger values indicate greater entanglement.
\section{Related Work}\label{sec:related}
We review approaches to \emph{multi-subject} T2I generation. We first cover \emph{training-free} attention-space interventions that operate at inference time. We then discuss methods that enforce \emph{regional/layout} constraints or combine multiple diffusion paths. Finally, we survey \emph{training-time} objectives that strengthen subject–attribute binding.

\begin{figure*}[t]
      \centering
      {\setlength{\tabcolsep}{.3mm}%
      \resizebox{\linewidth}{!}{%
        \begin{tabular}{@{}ccccccc@{}}%
            & \textbf{Base} & \textbf{Attend\&Excite} & \textbf{CONFORM} & \textbf{Divide\&Bind} &
            \textbf{Self-Cross Guidance} & \textbf{\textsc{FOCUS} (Ours)} \\
            \rotatebox{90}{\parbox{0.2\linewidth}{\centering \textbf{SD 3.5}}} & \HEXAND{SD3-FINE}{13}\\
            & \multicolumn{6}{c}{\textit{\small``A flamingo, a yoga mat, and a gramophone on a rooftop at sunset''}}\vspace{1mm}\\
            \rotatebox{90}{\parbox{0.2\linewidth}{\centering \textbf{FLUX.1}}} & \HEXAND{FLUX-FINE}{2}\\
            & \multicolumn{6}{c}{\textit{\small``A macaw, a cockatoo, and an Amazon parrot perched on a jungle vine''}}\\
        \end{tabular}%
    }}
    \caption{Qualitative results after fine-tuning Stable Diffusion 3.5 and FLUX.1. The sampling process is consistent for all heuristics and each heuristic is shown for its optimal $\lambda$ scale. Additional examples are provided in the Supplementary Material.}
    \label{fig:fine}
\end{figure*}

\medtit{Attention--space interventions (training--free).}
A large family of training-free methods improves multi-subject fidelity by optimizing a per-step attention objective during sampling. Most can be expressed as \emph{latent gradient guidance}: at timestep $t$, define a cost $f$ on attention maps (or related internal signals) and apply an update of the form
\begin{align}    
    X_t^\star \leftarrow X_t - \eta(t) \nabla_{X_t} f(X_t, t), \label{eq:latent_grad_update}
\end{align}
before the next model evaluation. Methods in this class differ primarily in the choice of $f$ (coverage, separation, binding, or structure) and in the guidance schedule $\eta(t)$ (inner steps, step size, and which layers/blocks are optimized).

The update above is typically applied as an \emph{inner-loop} modification of the latent, after which the sampler takes its usual step. Our SOC view makes this relationship explicit and yields a guidance term at the \emph{velocity} (or noise) level with a principled time schedule.
In particular, when using an Euler ODE solver update $X_{t+h} = X_t + h v_\theta(X_t,t)$ with step size $h$, substituting the guided velocity $v_t^\star$ from \cref{eq:ode_update} yields
\begin{align}
    X_{t+h}^\star = \underbrace{\left(X_t + h v_\theta(X_t,t) \right)}_{:= X_{t+h}} -  h \eta(t) \nabla_{X_t} f(X_t, t)
\end{align}
resembling the latent gradient guidance of \cref{eq:latent_grad_update}. 

Notable heuristics include \emph{Attend\&Excite}, which amplifies token activations to improve entity coverage and reduce neglect/leakage \cite{chefer_attend-and-excite_2023}; \emph{Divide\&Bind}, which optimizes separate terms for subject coverage and attribute binding \cite{li_divide_2024}; \emph{Structured Diffusion Guidance}, which injects linguistic structure (\eg{,} dependency trees) to shape attention manipulation for multi-object composition \cite{feng_training-free_2023}; \emph{Prompt-to-Prompt}, which constrains cross-attention correspondences to preserve word--region alignments across edits \cite{hertz_prompt--prompt_2022}; \emph{CONFORM}, which uses a contrastive (InfoNCE-style) objective to separate subjects while pulling subject--attribute pairs together \cite{meral_conform_2023}; and \emph{Self-Cross Guidance}, which additionally operates on self-attention between image tokens \cite{qiu2025self}.

While effective in specific setups, these methods typically rely on method-specific schedules and tuning choices (\eg{,} inner steps, step sizes, and attention-block selection), and their behavior can vary across backbones and samplers. In contrast, our method starts from a single SOC objective at the flow-matching level, exposing a clean fidelity--disentanglement knob $\lambda$ and yielding \emph{both} a test-time controller and a fine-tuning objective. Moreover, since our formulation operates on FM velocities, it is sampler- and architecture-agnostic and allows prior attention objectives to be reused within a unified control interface on modern models.

\medtit{Regional/layout composition and multi-path fusion.}
A complementary direction constrains \emph{where} subjects appear. \emph{MultiDiffusion} fuses multiple diffusion trajectories under shared spatial constraints (\eg{,} boxes or masks), enabling faithful multi-subject placement without retraining \cite{bar-tal_multidiffusion_2023}. Related systems extend this idea to interactive, region-based workflows. \emph{GLIGEN} augments a frozen backbone with grounding layers and conditions on bounding boxes or phrases to place multiple objects precisely \cite{li_gligen_2023}. More recently, \emph{Be Decisive} leverages the layout implicitly encoded in the initial noise and refines it during denoising, avoiding conflicts with externally imposed layouts and improving prompt alignment while preserving model priors \cite{dahary_be_2025}. These approaches disentangle primarily via spatial decoupling but often require user-specified or learned layouts, which increases user effort and restricts spontaneous subject interaction. Our method reduces entanglement without explicit spatial annotations, requiring only the text prompt. 

\medtit{Training-time objectives for multi-subject fidelity.}
Some works alter training signals to strengthen subject–attribute binding. \emph{TokenCompose} introduces token-level supervision to improve consistency for prompts with multiple categories and attributes \cite{wang_tokencompose_2024}. Region-aware objectives decompose complex prompts into per-region descriptions and enforce alignment, reducing cross-entity leakage. Such methods typically assume curated supervision and substantial retraining. In contrast, our fine-tuning objective is lightweight: it adapts pre-trained models via Adjoint Matching and requires only text prompts, while our test-time controller operates with zero parameter updates.

\begin{table*}[t]
    \caption{Test-time control results for each heuristic. We report mean $\pm$ std over all prompts and seeds; the top three values per metric are highlighted (gold/silver/bronze). All methods use the same sampling and evaluation pipeline and are shown at their respective optimal $\lambda$.}
    \vspace{-1.5mm}
    \label{tab:metrics_otf}
    \centering
    \resizebox{\textwidth}{!}{
    \renewcommand{\arraystretch}{1.05}
    \setlength{\tabcolsep}{1.5mm}%
	\begin{tabular}{@{}c<{\enspace}@{}lcccccc|cc@{}}\toprule
    & \textbf{Heuristic} & \textbf{CLIP  I-T$\uparrow$} & \textbf{SigLIP-2 I-T$\uparrow$} & \textbf{BLIP T-T$\uparrow$} & \textbf{Qwen2 T-T$\uparrow$} & \textbf{PickScore I-T$\uparrow$} & \textbf{ImgRew I-T$\uparrow$} & \textbf{Composite$\uparrow$} \\
    \midrule
    \multirow{5}{*}{\rotatebox{90}{\textbf{SD 3.5}}}
    & Base & 0.3474\scriptsize$\pm$0.03 & 0.2309\scriptsize$\pm$0.05 & 0.5731\scriptsize$\pm$0.15 & \cellcolor{bronze!20}0.6402\scriptsize$\pm$0.08 & 22.6940\scriptsize$\pm$0.99 & 1.3175\scriptsize$\pm$0.68 & 0.0000\scriptsize$\pm$0.00 \\
    & Attend\&Excite & \cellcolor{silver!20}0.3484\scriptsize$\pm$0.03 & \cellcolor{silver!20}0.2326\scriptsize$\pm$0.05 & \cellcolor{silver!20}0.5752\scriptsize$\pm$0.15 & \cellcolor{silver!20}0.6404\scriptsize$\pm$0.08 & \cellcolor{bronze!20}22.6950\scriptsize$\pm$1.01 & \cellcolor{bronze!20}1.3545\scriptsize$\pm$0.66 & 3.1714\scriptsize$\pm$52.62 \\
    & CONFORM & 0.3481\scriptsize$\pm$0.03 & \cellcolor{bronze!20}0.2323\scriptsize$\pm$0.05 & \cellcolor{gold!20}\textbf{0.5773}\scriptsize$\pm$0.15 & \cellcolor{gold!20}\textbf{0.6421}\scriptsize$\pm$0.08 & \cellcolor{silver!20}22.7188\scriptsize$\pm$0.99 & \cellcolor{silver!20}1.3684\scriptsize$\pm$0.64 & \cellcolor{bronze!20}3.4336\scriptsize$\pm$44.89 \\
    & Divide\&Bind & \cellcolor{gold!20}\textbf{0.3489}\scriptsize$\pm$0.03 & 0.2316\scriptsize$\pm$0.05 & 0.5742\scriptsize$\pm$0.14 & 0.6399\scriptsize$\pm$0.08 & 22.6779\scriptsize$\pm$1.03 & 1.3493\scriptsize$\pm$0.67 & \cellcolor{silver!20}3.9373\scriptsize$\pm$77.81 \\
    & FOCUS(Ours) & \cellcolor{bronze!20}0.3483\scriptsize$\pm$0.03 & \cellcolor{gold!20}\textbf{0.2344}\scriptsize$\pm$0.04 & \cellcolor{bronze!20}0.5751\scriptsize$\pm$0.15 & 0.6385\scriptsize$\pm$0.08 & \cellcolor{gold!20}\textbf{22.7499}\scriptsize$\pm$1.02 & \cellcolor{gold!20}\textbf{1.4003}\scriptsize$\pm$0.62 & \cellcolor{gold!20}\textbf{4.2865}\scriptsize$\pm$85.86 \\
    \midrule
    \multirow{5}{*}{\rotatebox{90}{\textbf{FLUX.1}}}
    & Base & \cellcolor{silver!20}0.3449\scriptsize$\pm$0.03 & \cellcolor{silver!20}0.2271\scriptsize$\pm$0.05 & \cellcolor{silver!20}0.5739\scriptsize$\pm$0.15 & 0.6300\scriptsize$\pm$0.09 & \cellcolor{bronze!20}23.4234\scriptsize$\pm$1.03 & \cellcolor{gold!20}\textbf{1.2970}\scriptsize$\pm$0.66 & 0.0000\scriptsize$\pm$0.00 \\
    & Attend\&Excite & 0.3430\scriptsize$\pm$0.03 & 0.2242\scriptsize$\pm$0.05 & 0.5716\scriptsize$\pm$0.14 & 0.6304\scriptsize$\pm$0.09 & 23.2549\scriptsize$\pm$1.11 & 1.2494\scriptsize$\pm$0.70 & \cellcolor{silver!20}1.7595\scriptsize$\pm$67.56 \\
    & CONFORM & 0.3436\scriptsize$\pm$0.03 & 0.2252\scriptsize$\pm$0.05 & \cellcolor{bronze!20}0.5726\scriptsize$\pm$0.15 & \cellcolor{bronze!20}0.6321\scriptsize$\pm$0.09 & 23.3574\scriptsize$\pm$1.03 & 1.2461\scriptsize$\pm$0.70 & 1.5114\scriptsize$\pm$26.28 \\
    & Divide\&Bind & \cellcolor{gold!20}\textbf{0.3453}\scriptsize$\pm$0.03 & \cellcolor{gold!20}\textbf{0.2272}\scriptsize$\pm$0.05 & 0.5722\scriptsize$\pm$0.15 & \cellcolor{gold!20}\textbf{0.6330}\scriptsize$\pm$0.08 & \cellcolor{gold!20}\textbf{23.4395}\scriptsize$\pm$1.02 & \cellcolor{silver!20}1.2939\scriptsize$\pm$0.67 & \cellcolor{bronze!20}1.6352\scriptsize$\pm$44.00 \\
    & FOCUS(Ours) & \cellcolor{bronze!20}0.3446\scriptsize$\pm$0.03 & \cellcolor{bronze!20}0.2268\scriptsize$\pm$0.05 & \cellcolor{gold!20}\textbf{0.5741}\scriptsize$\pm$0.14 & \cellcolor{silver!20}0.6326\scriptsize$\pm$0.08 & \cellcolor{silver!20}23.4274\scriptsize$\pm$1.02 & \cellcolor{bronze!20}1.2913\scriptsize$\pm$0.67 & \cellcolor{gold!20}\textbf{1.9712}\scriptsize$\pm$31.77 \\
    \bottomrule
  \end{tabular}}
\end{table*}

\section{Experiments}
We evaluate our approach along two axes: (i) SOC-based test-time control vs.\ fine-tuning, and (ii) our probabilistic loss \textsc{FOCUS} vs.\ existing attention heuristics used as running costs $f$. We first describe setup, then report \emph{on-the-fly} and \emph{fine-tuning} results. Further details, ablations, and qualitative examples are provided in the Supplementary Material. All experiments ran on NVIDIA A100/H100 GPUs. While the test-time controller runs on commodity GPUs with as little as 12 GB VRAM, fine-tuning experiments fit within the VRAM of H100 GPUs. In terms of runtime, test-time steering requires backpropagation, increasing latency (+138\% for SD3, +90\% for FLUX); the fine-tuned models add zero inference-time overhead over the base model.

\medtit{Base Models.} 
We report results on two open-source flow-matching models: \emph{Stable Diffusion 3.5} (SD~3.5) \cite{esser_scaling_2024} and \emph{FLUX.1 [dev]} (FLUX.1) \cite{labs_flux1_2025}.

\medtit{Dataset.}
We create a 150-prompt corpus with 2–4 subjects per prompt using GPT-5. Half the prompts contain \emph{similar} subjects (\eg{,} ``a black bear and a brown bear[...]''); the rest contain \emph{dissimilar} subjects (\eg{,} ``a snowboard, a telescope, and a husky[...]''). For each prompt, we annotate subject token indices for both CLIP and T5 encoders to allow extraction cross-attention maps. Such per-subject annotations are typically absent from existing corpora.

\begin{table*}
    \caption{Fine-tuning results for each heuristic. We report mean $\pm$ std over all prompts and seeds; the top three values per metric are highlighted (gold/silver/bronze). All methods use the same sampling and evaluation pipeline and are shown at their respective optimal $\lambda$.}\vspace{-1.5mm}
    \label{tab:metrics_finetune}
    \centering
    \resizebox{\textwidth}{!}{
    \renewcommand{\arraystretch}{1.05}
    \setlength{\tabcolsep}{1.5mm}%
	\begin{tabular}{@{}c<{\enspace}@{}lcccccc|cc@{}}\toprule
    & \textbf{Heuristic} & \textbf{CLIP  I-T$\uparrow$} & \textbf{SigLIP-2 I-T$\uparrow$} & \textbf{BLIP T-T$\uparrow$} & \textbf{Qwen2 T-T$\uparrow$} & \textbf{PickScore I-T$\uparrow$} & \textbf{ImgRew I-T$\uparrow$} & \textbf{Composite$\uparrow$} \\
    \midrule
    \multirow{5}{*}{\rotatebox{90}{\textbf{SD 3.5}}}
    & Base & 0.3474\scriptsize$\pm$0.03 & \cellcolor{silver!20}0.2309\scriptsize$\pm$0.05 & 0.5731\scriptsize$\pm$0.15 & \cellcolor{silver!20}0.6402\scriptsize$\pm$0.08 & \cellcolor{silver!20}22.6940\scriptsize$\pm$0.99 & 1.3175\scriptsize$\pm$0.68 & 0.0000\scriptsize$\pm$0.00 \\
    & Attend\&Excite & 0.3469\scriptsize$\pm$0.03 & 0.2281\scriptsize$\pm$0.04 & \cellcolor{silver!20}0.5747\scriptsize$\pm$0.15 & \cellcolor{gold!20}\textbf{0.6425}\scriptsize$\pm$0.08 & \cellcolor{gold!20}\textbf{22.8429}\scriptsize$\pm$1.01 & \cellcolor{silver!20}1.4460\scriptsize$\pm$0.60 & \cellcolor{silver!20}5.7181\scriptsize$\pm$121.76 \\
    & CONFORM & \cellcolor{bronze!20}0.3478\scriptsize$\pm$0.03 & \cellcolor{bronze!20}0.2294\scriptsize$\pm$0.05 & 0.5646\scriptsize$\pm$0.15 & \cellcolor{bronze!20}0.6393\scriptsize$\pm$0.09 & 22.5962\scriptsize$\pm$0.99 & \cellcolor{bronze!20}1.3782\scriptsize$\pm$0.63 & \cellcolor{bronze!20}3.4583\scriptsize$\pm$105.28 \\
    & Divide\&Bind & \cellcolor{silver!20}0.3486\scriptsize$\pm$0.03 & 0.2266\scriptsize$\pm$0.05 & \cellcolor{gold!20}\textbf{0.5870}\scriptsize$\pm$0.14 & 0.6358\scriptsize$\pm$0.08 & 22.3401\scriptsize$\pm$0.99 & 1.3524\scriptsize$\pm$0.68 & 0.8006\scriptsize$\pm$69.71 \\
    & FOCUS(Ours) & \cellcolor{gold!20}\textbf{0.3495}\scriptsize$\pm$0.03 & \cellcolor{gold!20}\textbf{0.2331}\scriptsize$\pm$0.04 & \cellcolor{bronze!20}0.5744\scriptsize$\pm$0.15 & 0.6383\scriptsize$\pm$0.08 & \cellcolor{bronze!20}22.6445\scriptsize$\pm$0.97 & \cellcolor{gold!20}\textbf{1.4495}\scriptsize$\pm$0.58 & \cellcolor{gold!20}\textbf{5.9174}\scriptsize$\pm$119.94 \\
    \midrule
    \multirow{5}{*}{\rotatebox{90}{\textbf{FLUX.1}}}
    & Base & 0.3449\scriptsize$\pm$0.03 & 0.2271\scriptsize$\pm$0.05 & 0.5739\scriptsize$\pm$0.15 & 0.6300\scriptsize$\pm$0.09 & \cellcolor{gold!20}\textbf{23.4234}\scriptsize$\pm$1.03 & 1.2970\scriptsize$\pm$0.66 & 0.0000\scriptsize$\pm$0.00 \\
    & Attend\&Excite & \cellcolor{gold!20}\textbf{0.3468}\scriptsize$\pm$0.03 & \cellcolor{silver!20}0.2320\scriptsize$\pm$0.05 & \cellcolor{gold!20}\textbf{0.5876}\scriptsize$\pm$0.15 & \cellcolor{silver!20}0.6382\scriptsize$\pm$0.08 & \cellcolor{bronze!20}23.3333\scriptsize$\pm$1.01 & \cellcolor{silver!20}1.3806\scriptsize$\pm$0.62 & \cellcolor{silver!20}2.3477\scriptsize$\pm$79.28 \\
    & CONFORM & \cellcolor{bronze!20}0.3458\scriptsize$\pm$0.03 & \cellcolor{bronze!20}0.2305\scriptsize$\pm$0.04 & \cellcolor{silver!20}0.5800\scriptsize$\pm$0.15 & \cellcolor{bronze!20}0.6369\scriptsize$\pm$0.08 & \cellcolor{silver!20}23.3724\scriptsize$\pm$1.00 & \cellcolor{bronze!20}1.3631\scriptsize$\pm$0.63 & \cellcolor{bronze!20}1.9591\scriptsize$\pm$82.88 \\
    & Divide\&Bind & 0.3445\scriptsize$\pm$0.03 & 0.2296\scriptsize$\pm$0.05 & 0.5705\scriptsize$\pm$0.15 & 0.6246\scriptsize$\pm$0.09 & 23.1909\scriptsize$\pm$1.06 & 1.2269\scriptsize$\pm$0.70 & 0.2002\scriptsize$\pm$47.34 \\
    & FOCUS(Ours) & \cellcolor{silver!20}0.3468\scriptsize$\pm$0.03 & \cellcolor{gold!20}\textbf{0.2328}\scriptsize$\pm$0.05 & \cellcolor{bronze!20}0.5780\scriptsize$\pm$0.15 & \cellcolor{gold!20}\textbf{0.6386}\scriptsize$\pm$0.08 & 23.3278\scriptsize$\pm$1.01 & \cellcolor{gold!20}\textbf{1.3899}\scriptsize$\pm$0.61 & \cellcolor{gold!20}\textbf{2.5881}\scriptsize$\pm$78.83 \\
    \bottomrule
  \end{tabular}}
\end{table*}

\medtit{Metrics.} 
Following \citet{yu_attention_2025}, we report two alignment groups. For image–text (I–T) alignment we compute CLIP~\cite{radford_clip} and SigLIP-2~\cite{tschannen_siglip_2025} cosine similarities. For caption-based text–text (T–T) faithfulness, we caption each image with BLIP~\cite{li_blip_2022} and Qwen2-VL~\cite{wang_qwen2-vl_2024} and measure similarity to the prompt. To better reflect human preference and penalize artifacts, we additionally report PickScore \cite{kirstain_pick--pic_2023} and ImageReward \cite{xu_imagereward_2023}. 

For model selection, we use a \emph{composite} score obtained by macro-averaging baseline-relative gains across all metrics. We generate five images per prompt and hyperparameter setting (distinct seeds), fixing sampler, steps, guidance, and resolution. Metric variants and computation details are given in the Supplementary Material.

\medtit{Baselines and heuristics.}
To study the portability of attention-space heuristics under a unified control framework, we consider \emph{Attend\&Excite} \cite{chefer_attend-and-excite_2023}, \emph{CONFORM} \cite{meral_conform_2023}, and \emph{Divide\&Bind} \cite{li_divide_2024}, and our probabilistic loss \textsc{FOCUS}. Each heuristic is instantiated as a running cost $f_h$ in our SOC objective and scaled by a single strength parameter $\lambda$.

\medtit{Control strength.}
Each running-cost heuristic $f_h$ is scaled by a single strength parameter $\lambda$, \ie{,} $f_{\lambda}(x,t)=\lambda\,f_h(x,t)$, which controls how strongly we steer away from the base sampler. For every heuristic and model, we sweep a fixed grid of $\lambda$ values and report results at the \emph{per-heuristic} optimum selected by the composite score.

\medtit{Human study.} 
Because automated metrics under-detect subtle entanglement \cite{dahary_be_2025}, we run a prompt-conditioned pairwise preference study with 50 participants. Each trial shows a prompt and two images; participants select the better match, yielding \(2{,}000\) comparisons. We report win rates and Elo ratings; protocol and per-prompt results appear in the Supplementary Material.

\subsection{On-the-fly disentanglement (test-time control)}
\begin{table}
    \robustify\bfseries
    \centering
    \caption{Human preference study for test-time control and fine-tuned models. Report pairwise win rate and Elo rating.}
    \label{tab:tab_human}
    \vspace{-2mm}
    \resizebox{0.98\linewidth}{!}{
        \begin{tabular}{@{}c<{\enspace}@{}lccccc@{}}
        \toprule
        & \multirow{2}{*}{\textbf{Heuristic}} & \multicolumn{2}{c}{\textbf{SD 3.5}} & \multicolumn{2}{c}{\textbf{FLUX.1}}\\
        \cmidrule(r){3-4}\cmidrule(l){5-6}
        & & {Win[\%]} & {Elo$\uparrow$} & {Win[\%]} & {Elo$\uparrow$}\\
        \midrule
        \multirow{5}{*}{\rotatebox{90}{\textbf{Test-Time}}}
        & Base & {45\%} & {\cellcolor{bronze!20}1517} & {46\%} & {1464} \\
        & Attend\&Excite & {\cellcolor{silver!20}53\%} & {1500} & {49\%} & {\cellcolor{silver!20}1526} \\
        & CONFORM & {42\%} & {1373} & {\cellcolor{silver!20}50\%} & {\cellcolor{bronze!20}1498} \\
        & Divide\&Bind & {\cellcolor{bronze!20}50\%} & {\cellcolor{gold!20}1562} & {\cellcolor{silver!20}50\%} & {1450} \\
        & \textsc{FOCUS} (Ours) & {\cellcolor{gold!20}\textbf{58\%}} & {\cellcolor{silver!20}1548} & {\cellcolor{gold!20}\textbf{54\%}} & {\cellcolor{gold!20}\textbf{1562}} \\
        \midrule
        \multirow{5}{*}{\rotatebox{90}{\textbf{Fine-tuning}}}
        & Base & {39\%} & {1355} & {\cellcolor{silver!20}51\%} & {1462} \\
        & Attend\&Excite & {\cellcolor{silver!20}56\%} & {\cellcolor{silver!20}1584} & {\cellcolor{bronze!20}50\%} & {\cellcolor{bronze!20}1476} \\
        & CONFORM & {\cellcolor{bronze!20}49\%} & {\cellcolor{bronze!20}1520} & {50\%} & {\cellcolor{gold!20}\textbf{1620}} \\
        & Divide\&Bind & {48\%} & {1436} & {43\%} & {1442} \\
        & \textsc{FOCUS} (Ours) & {\cellcolor{gold!20}\textbf{57\%}} & {\cellcolor{gold!20}\textbf{1605}} & {\cellcolor{gold!20}\textbf{54\%}} & {\cellcolor{silver!20}1500} \\
        \bottomrule
        \end{tabular}
    }
\end{table}
We sweep ten $\lambda$ values per heuristic and select the best via the composite score. \Cref{tab:metrics_otf} reports per-heuristic, per-model metrics at the optimal $\lambda$; representative samples are shown in \cref{fig:otf}, and human preferences in \cref{tab:tab_human}.

Across SD~3.5 and FLUX.1, all SOC-controlled heuristics improve over the base sampler on the composite score, indicating that the SOC formulation offers a principled way to port legacy attention heuristics to modern flow-matching models. Qualitatively, subjects are more reliably present and more clearly separated.

\textsc{FOCUS} is competitive on individual automatic metrics and achieves one of the highest composite scores on both backbones. In the human study, \textsc{FOCUS} obtains the best win rates and Elo ratings for both SD~3.5 and FLUX.1, suggesting that a probabilistic attention loss is a particularly strong choice for test-time SOC control.

FOCUS also improves \emph{native} latent-update guidance: replacing CONFORM’s original objective in its SD1.5 pipeline increases average alignment metrics by $+2.76\%$.

\subsection{Fine-tuning for disentanglement}
We next evaluate \emph{fine-tuned} SOC controllers learned via Adjoint Matching. We insert LoRA layers \cite{hu_lora_2021} with rank $r{=}4$ into self-attention blocks, freeze all other parameters, and update $<0.1\%$ of weights. We sweep $\lambda$ and a small set of optimization and data hyperparameters. \Cref{tab:metrics_finetune} reports quantitative results, \cref{fig:fine} shows samples, and \cref{tab:tab_human} summarizes human preferences.

Fine-tuned models consistently outperform their test-time counterparts on the composite score for both SD~3.5 and FLUX.1, while preserving base style and diversity. This aligns with our analysis: fine-tuning integrates the lean adjoint over full trajectories, whereas test-time control relies on a local single-pass approximation.

\medtit{Data efficiency.}
To assess generalization, we fine-tune \textsc{FOCUS} on three subsets of our corpus: 1 prompt, 15 prompts, and all 150 prompts. \Cref{tab:fine_data} shows composite scores. A single training prompt already yields strong gains on SD~3.5 and competitive gains on FLUX.1, indicating that the SOC controller learns a broadly useful disentangling direction from very limited supervision. Larger datasets slightly reduce peak composite improvements but improve robustness across prompts (see Supplementary Material).

\begin{table}
    \centering
    \caption{Composite scores for \textsc{FOCUS} fine-tuning across subsets.}
    \vspace{-2mm}
    \label{tab:fine_data}
    \resizebox{0.9\linewidth}{!}{
        \begin{tabular}{@{}lccc@{}}
        \toprule
        \textbf{Model} & 1 prompt & 15 prompts & 150 prompts\\
        \midrule
        \textbf{SD~3.5} & 5.917\scriptsize$\pm$119.94 & 3.191\scriptsize$\pm$99.79 & 1.682\scriptsize$\pm$53.85\\
        \textbf{FLUX.1} & 2.588\scriptsize$\pm$78.83 & 2.457\scriptsize$\pm$86.64 & 1.810\scriptsize$\pm$90.13\\
        \bottomrule
        \end{tabular}
    }
\end{table}

\medtit{Human preferences.}
Human judgments mirror the quantitative trends. As shown in \cref{tab:tab_human}, \textsc{FOCUS} attains the highest win rates and Elo ratings among all heuristics for SD~3.5, and the highest win rate (with competitive Elo) for FLUX.1. Together with the automatic metrics, this indicates that \textsc{FOCUS} is an effective running cost for both test-time and fine-tuned SOC controllers, and that the SOC framework provides a unified, architecture-agnostic way to improve multi-subject disentanglement.

\subsection{Comparison to Self-Cross Guidance}
For completeness, we also evaluate Self-Cross Guidance (SC Guidance) \cite{qiu2025self}, which, unlike our other baselines, jointly optimizes cross- and self-attention over image tokens. This makes it less directly comparable to methods that intervene only in cross-attention.

Using the same evaluation protocol, SC Guidance attains the highest test-time composite scores ($4.89$ / $1.97$ on SD~3.5 / FLUX.1) but degrades under fine-tuning ($3.75$ / $-0.58$), in the latter case even underperforming the base model; see \cref{fig:otf,fig:fine} for examples. Qualitatively, we observe pronounced segmentation artifacts, suggesting that aggressively shaping self-attention can harm image quality. Because of these failure modes and its different intervention space, we exclude SC Guidance from our human study and focus user evaluations on the more comparable cross-attention–based heuristics.

\section{Discussion}
We introduced a control-theoretic framework for improving multi-subject fidelity in flow-matching T2I models. A single SOC objective yields two instantiations: a single-pass test-time controller and a lightweight fine-tuned controller. The framework treats prior attention heuristics as interchangeable running costs, and our probabilistic loss \textsc{FOCUS} provides the most consistent gains across backbones and evaluation settings. The two instantiations trade compute for convenience. Test-time control steers a frozen model given subject tokens, but increases inference time by roughly $2\times$. Fine-tuning requires subject tokens only during training and preserves baseline inference speed, while generalizing well beyond its training data.

\newpage

{
    \small
    \bibliographystyle{ieeenat_fullname}
    \bibliography{references}
}

\clearpage
\appendix
\setcounter{page}{1}
\onecolumn
\maketitlesupplementaryone
\section*{Supplementary Material Contents}
\startcontents[sections]
\printcontents[sections]{l}{1}{\setcounter{tocdepth}{2}}

{%
  \let\clearpage\relax
  \let\newpage\relax
  \maketitlesupplementary
    \onecolumn
  \startcontents[sections]
  \printcontents[sections]{l}{1}{\setcounter{tocdepth}{2}}%
}
\onecolumn
\newpage
\section{FOCUS Heuristic}\label{app:focus}
In this section we detail \textsc{FOCUS}, our probabilistic attention heuristic used as a running cost for disentanglement. We emphasize three practical design choices: (i) encoding spatial proximity before measuring divergence, (ii) aggregating attention maps prior to scoring, and (iii) omitting an explicit collapse regularizer.

\paragraph{Spatially aware divergence.}
We promote separation of subjects by maximizing a Jensen--Shannon divergence (JSD) defined over attention distributions. A na\"ive computation on flattened maps discards locality, allowing distant activations to interact as if adjacent. To preserve spatial structure, we (i) reshape token-embedding maps to the target aspect ratio (height adn width of the resulting image), (ii) apply a simple 2D Gaussian smoothing (symmetric kernel of size 3), and only then (iii) flatten for scoring. This encodes proximity into the flattened attention tensor and mitigates grid-like artifacts during optimization.

\paragraph{Block selection and aggregation.}
Modern T2I backbones follow Diffusion Transformer designs \cite{peebles_scalable_2023}. Rather than computing scores \emph{per block} and averaging their scores which can result in conflicting update directions, we first aggregate attention and then score. Concretely, we average cross-attention maps over all blocks that process text and image tokens \emph{separately}, producing a single map per token and a consistent optimization direction. Blocks that jointly process text and image tokens are excluded from this aggregation for compatibility.

\paragraph{No explicit collapse regularizer.}
We experimented with an entropy-based regularizer aimed at discouraging overly concentrated (collapsed) attention. Let $H(\vp) = -\sum_i \vp_i \log \vp_i$ denote the Shannon Entropy and $\widehat{H}(\vp)=H(\vp)/\log d \in[0,1]$ its normalized version, where $d$ is the number of spatial locations. For each subject we form its mixture distribution $\vm_\vs$ and added
\begin{align}
    \gamma_\text{reg} \cdot \frac{1}{|S|}\sum_{\vs \in S}\left( 1 - \widehat{H}(\vm_s)\right),
\end{align}
scaling by $\gamma_\text{reg} > 0$ to control its effect. In our experiments, small $\gamma_\text{reg}$ made the term largely inactive, while larger $\gamma_\text{reg}$ pushed mass away from the subject rather than stabilizing it, see \cref{fig:reg} for an example. We therefore omit this term in \textsc{FOCUS} and rely on the probabilistic objective described above.

\begin{figure}[htb]
    \centering
    \begin{subfigure}[t]{0.19\linewidth}
        \centering
        \captionsetup{labelformat=empty}
        \caption{Base}
        \includegraphics[width=\linewidth]{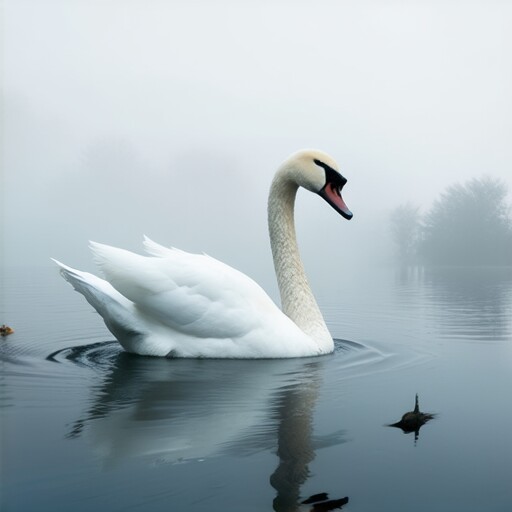}
    \end{subfigure}\hfill
    \begin{subfigure}[t]{0.19\linewidth}
        \centering
        \captionsetup{labelformat=empty}
        \caption{$\gamma_\text{reg}=0$}
        \includegraphics[width=\linewidth]{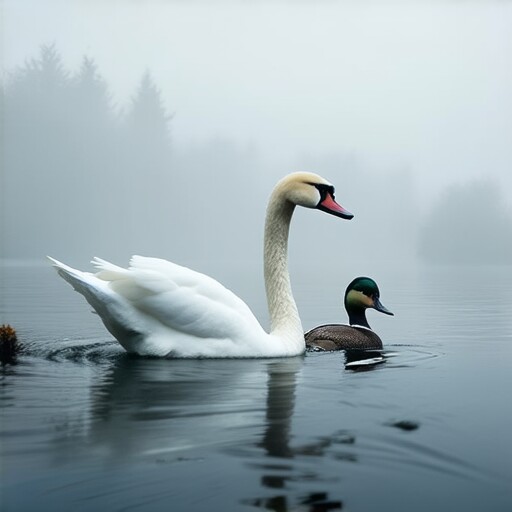}
    \end{subfigure}\hfill
    \begin{subfigure}[t]{0.19\linewidth}
        \centering
        \captionsetup{labelformat=empty}
        \caption{$\gamma_\text{reg}=0.01$}
        \includegraphics[width=\linewidth]{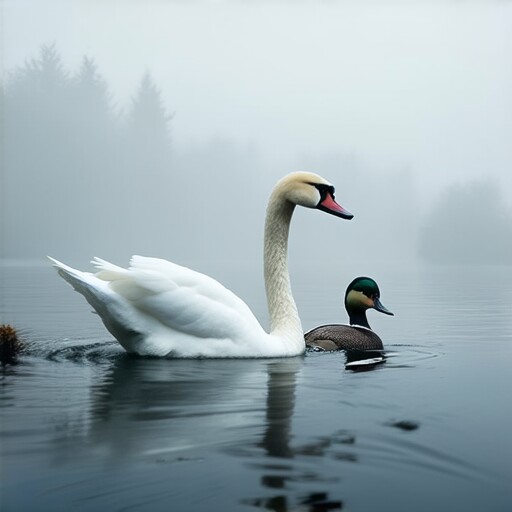}
    \end{subfigure}
    \begin{subfigure}[t]{0.19\linewidth}
        \centering
        \captionsetup{labelformat=empty}
        \caption{$\gamma_\text{reg}=1$}
        \includegraphics[width=\linewidth]{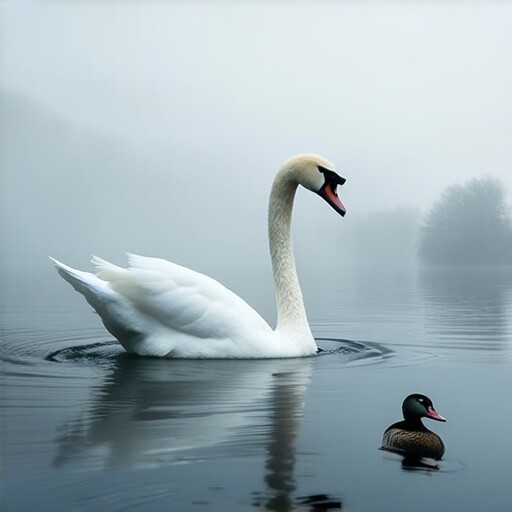}
    \end{subfigure}
    \begin{subfigure}[t]{0.19\linewidth}
        \centering
        \captionsetup{labelformat=empty}
        \caption{$\gamma_\text{reg}=10$}
        \includegraphics[width=\linewidth]{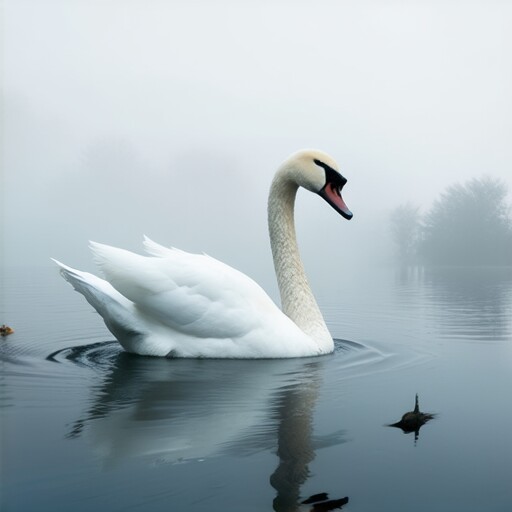}
    \end{subfigure}
    \caption{Ablation of regularizer strength $\gamma_{\mathrm{reg}}$ for the test-time controller on Stable Diffusion 3.5.}
    \label{fig:reg}
\end{figure}

\begin{lemma}[Upper Bound of Jensen--Shannon Divergence]\label{lem:jsd}
    Let $P = \{\vp^{(1)}, \dots, \vp^{(n)}\} \subset\Delta^{d-1}$ be a set of probability distributions. Then, $D_\mathrm{JS}(P)$ is upper bounded by $\log n$.   
\end{lemma}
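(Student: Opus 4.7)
The plan is to derive the bound by a pointwise comparison between each $\vp^{(i)}$ and the mixture $\vm$. The key observation is that since $\vm$ is a uniform mixture, for every coordinate $k$ and every index $i$ we have $m_k = \frac{1}{n}\sum_{j=1}^n p^{(j)}_k \geq \frac{1}{n}\, p^{(i)}_k$. Hence, whenever $p^{(i)}_k > 0$, the log-ratio satisfies $\log\!\bigl(p^{(i)}_k / m_k\bigr) \leq \log n$, while coordinates with $p^{(i)}_k = 0$ contribute zero under the standard convention $0 \log 0 = 0$. Note also that $m_k = 0$ forces $p^{(i)}_k = 0$, so absolute continuity $\vp^{(i)} \ll \vm$ is automatic and the KL sum is well-defined.

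Plugging this pointwise bound into the definition of KL divergence gives
\[
D_{\mathrm{KL}}(\vp^{(i)} \,\|\, \vm) \;=\; \sum_{k=1}^d p^{(i)}_k \log\!\frac{p^{(i)}_k}{m_k} \;\leq\; (\log n)\sum_{k=1}^d p^{(i)}_k \;=\; \log n,
\]
since each $\vp^{(i)}$ is a probability distribution. Averaging this bound over $i = 1, \dots, n$ yields exactly $D_{\mathrm{JS}}(P) \leq \log n$, completing the proof.

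As a cross-check one can also argue via mutual information: letting $Z \sim \mathrm{Unif}\{1,\dots,n\}$ and $X \mid Z = i \sim \vp^{(i)}$, the marginal of $X$ is $\vm$, and the standard identity $I(X;Z) = H(\vm) - \frac{1}{n}\sum_i H(\vp^{(i)})$ shows $D_{\mathrm{JS}}(P) = I(X;Z) \leq H(Z) = \log n$. The only subtlety in either route is the treatment of coordinates where $p^{(i)}_k = 0$, which is handled uniformly by the $0 \log 0 = 0$ convention; beyond that, the argument is essentially a one-line consequence of the defining mixture structure, so I do not anticipate a substantive obstacle.
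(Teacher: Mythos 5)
Your proof is correct and follows essentially the same route as the paper's: bounding each term $D_{\mathrm{KL}}(\vp^{(i)}\,\|\,\vm)$ by $\log n$ via the pointwise inequality $m_k \geq \tfrac{1}{n}p^{(i)}_k$ and then averaging. The extra care with the $0\log 0$ convention and the mutual-information cross-check are welcome additions but do not change the argument.
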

\begin{proof}
    Define $P$ as in \cref{lem:jsd}, then the JSD is defined as follows:
    \begin{align*}
        D_\mathrm{JS}(P) = \frac{1}{n}\sum_{k=1}^{n} D_\mathrm{KL}\left(\vp^{(k)} \;\|\; \vm\right), \quad \vm = \frac{1}{n}\sum_{k=1}^{n} \vp^{(k)}.
    \end{align*}
    We can upper bound each $D_\mathrm{KL}$-term as follows:
    \begin{align*}
        D_\mathrm{KL} (\vp^{(k)} \;\|\; \vm) &= \sum_{i=1}^d p^{(k)}_i \log \frac{p^{(k)}_i}{m_i}\\
        &= \sum_{i=1}^d {p}^{(k)}_i \log \frac{{p}^{(k)}_i}{\frac{1}{n}\sum_{\ell=1}^n p^{(\ell)}_i}\\
        &= \sum_{i=1}^d {p}^{(k)}_i \log \left( n \cdot \frac{p^{(k)}_i}{\sum_{\ell=1}^n p^{(\ell)}_i} \right)\\
        &\leq \sum_{i=1}^d {p}^{(k)}_i \log n\\
        &= \log n.
    \end{align*}
    Plugging this bound back into the definition of the JSD, yields the desired results:
    \begin{align*}
         \frac{1}{n}\sum_{k=1}^{n} D_\mathrm{KL}\left(\vp^{(k)} \;\|\; \vm\right) \leq \frac{1}{n}\sum_{k=1}^{n} \log n = \log n
    \end{align*}
\end{proof}

\paragraph{Normalization.}
Because \(D_{\mathrm{JS}}(P)\in[0,\log n]\), we use the normalized score
\(\widehat{D}_{\mathrm{JS}}(P)=D_{\mathrm{JS}}(P)/\log n \in[0,1]\),
which makes values comparable across different set sizes.

\section{Denoising Diffusion as Flow Matching}\label{app:denoising}
This section makes precise how classical denoising diffusion (score-based) models arise as a special case of the flow-matching (FM) framework. We first derive the continuous-time SDE limit of the variance-preserving (VP) family \cite{sohl-dickstein_deep_2015,ho_denoising_2020,song_score-based_2021}; after we express reverse-time generation; and finally show an explicit parameterization that uses a diffusion model as an FM velocity field. Analogous statements hold for VE and EDM variants \cite{nichol_improved_2021,karras_elucidating_2022,karras_analyzing_2024}.

\subsection{Variance-Preserving chain to SDE}
Let $X_0 \sim p_{\mathrm{data}}$. The standard $K$-step VP forward noising chain is
\begin{align}
    X_k = \sqrt{\alpha_k} X_{k-1} + \sqrt{1-\alpha_k}\epsilon_k, \quad \epsilon_k \sim \mathcal N(0,\mI), \quad k=1,\dots,K,
\end{align}
where $\alpha_k := 1-\beta_k \in (0,1)$ with $\beta_k\in(0,1)$ typically increasing over $k$ \cite{ho_denoising_2020}. This yields the closed-form marginal
\begin{align}
X_k \mid X_0 \sim \mathcal N \big(\sqrt{\bar\alpha_k}\,X_0,\ (1-\bar\alpha_k)\,\mI\big),
\qquad \bar\alpha_k := \prod_{i=1}^k \alpha_i. \label{eq:ddpm-marginal}
\end{align}
For sufficiently large $K$, $X_K$ is approximately standard normal \cite{ho_denoising_2020}.

We lift this formulation to continuous time by defining a uniform grid $\tau_k := k/K$ on $[0,1]$, so every increment is $\Delta \tau = 1/K$. Define a piecewise-constant rate $\beta(\tau)$ via $\beta(\tau) := \beta_k/\Delta \tau$ for $\tau \in [\tau_{k-1}, \tau_k)$. Then by using the first-order Taylor approximation of $\sqrt{1+x}$, we can rewrite $\sqrt{\alpha_k} = \sqrt{1 - \beta_k} \approx 1 - \frac{1}{2}\beta_k + \mathcal{O}(\beta_k^2)$, and obtain
\begin{align}
    \Delta X_k &:= X_k - X_{k-1}\\
    &= -\frac{1}{2}\beta_k X_{k-1} + \sqrt{\beta_k} \epsilon_k + \mathcal{O}(\beta_k^2)\\
    &= \left( -\frac{1}{2}\beta(\tau_{k-1}) X_{k-1} \right) \Delta \tau + \sqrt{\beta(\tau_{k-1})} \sqrt{\Delta \tau}\epsilon_k + \mathcal{O}\left((\Delta \tau)^\frac{3}{2}\right).
\end{align}
This is the Euler--Maruyama discretizations of the forward/diffusion VP-SDE:
\begin{align}
    d X_\tau = -\frac{1}{2}\beta(\tau) X_\tau d\tau + \sqrt{\beta(\tau)} dB_\tau, \quad \tau \in [0,1],
\end{align}
and the discrete chain converges to this SDE as $K\rightarrow \infty$. Moreover, the SDE has Gaussian marginals
\begin{align}
    X_\tau \mid X_0 \sim \mathcal{N}\left(\sqrt{\bar{\alpha}}(\tau)X_0, \left(1 - \bar{\alpha}(\tau)\right)\mI\right), \quad\text{with}\quad \bar{\alpha}(\tau) := \exp\left( -\int_0^\tau \beta(u)du\right),
\end{align}
which matches \cref{eq:ddpm-marginal} at the grid points if we choose $\bar{\alpha}(\tau_k)=\bar{\alpha}_k$ \cite{song_score-based_2021}.

\subsection{Reverse-time dynamics}
We now reverse time to generate from noise to data. Let $\bar{\tau} = 1 - \tau$ denote the \emph{generative time}. By classical time reversal diffusion \cite{anderson_reverse-time_1982} the reverse-time process satisfies
\begin{align}
    d X_{\tau} = \left(-\frac{1}{2}\beta{\tau} X_{\tau} - \beta(\tau) \nabla_X \log p_\tau(X_{\tau})\right) d\bar{\tau} + \sqrt{\beta(\tau)} d \bar{B}_\tau, \quad\text{with}\quad d\bar{\tau} = - d\tau,
\end{align}
where $p_\tau$ are the forward-time marginals and $\nabla_X \log p_\tau$ is the score \cite{song_score-based_2021}.

In practice, most diffusion architectures parameterize the model via \emph{noise prediction} $\epsilon_\theta$ \cite{ho_denoising_2020,karras_elucidating_2022,karras_analyzing_2024}, which is related to the score by:
\begin{align}
    \nabla_X \log p_\tau(x) = - \frac{\epsilon_\theta(x, \tau)}{\sqrt{1 - \bar{\alpha}(\tau)}}.
\end{align}

\subsection{Time change to Flow Matching}
To embed VP diffusion into FM, we reparameterize time so that FM runs from noise to data, setting $t := 1-\tau$, which yields the following FM schedules:
\begin{align}
    \alpha^{\mathrm{FM}}_t := \sqrt{\bar\alpha(1-t)}, \quad\text{and}\quad
    \beta^{\mathrm{FM}}_t := \sqrt{1-\bar\alpha(1-t)}.
\end{align}

\subsection{Score relations}
For linear Gaussian reference paths, the score $s(x,t):=\nabla_X\log p_t(x)$ and the FM vector field $v_\theta(x,t)$ are linked by a schedule-dependent affine map \cite{lipman_flow_2023,albergo_stochastic_2023,liu_flow_2022}:
\begin{align}
    s(x,t) = \frac{1}{\eta_t}\Big(v_\theta(x,t) - \kappa_t\,x\Big),
    \qquad
    \kappa_t := \frac{\dot\alpha^{\mathrm{FM}}_t}{\alpha^{\mathrm{FM}}_t},\quad
    \eta_t := \beta^{\mathrm{FM}}_t\left(\frac{\dot\alpha^{\mathrm{FM}}_t}{\alpha^{\mathrm{FM}}_t} \beta^{\mathrm{FM}}_t - \dot\beta^{\mathrm{FM}}_t\right).
\end{align}

Combining the noise–score relation with the time change $\tau=1-t$ gives:
\begin{align}
    s(x,t) = \nabla_X \log p_t(x) = -\frac{\epsilon_\theta(x, 1-t)}{\beta^{\mathrm{FM}}_t},
\end{align}
since $\beta^{\mathrm{FM}}_t=\sqrt{1-\bar\alpha(1-t)}$. Substituting this into the score–velocity map yields the corresponding FM \emph{velocity prediction} induced by an $\epsilon$-parameterized diffusion model:
\begin{align}
    v_\theta(x,t) = \kappa_t x - \eta_t \frac{\epsilon_\theta(x,1-t)}{\beta^{\mathrm{FM}}_t}.
\end{align}
This identity lets an $\epsilon$-trained diffusion model be used directly as an FM velocity field for the VP-induced schedules above; plugging $v_\theta$ into the FM SDE recovers the reverse-time VP sampler (and setting $\sigma\equiv 0$ recovers the probability-flow/DDIM ODE) under the change of variables $t=1-\tau$.

\section{Evaluation Setup}\label{app:reproducibility}
We detail our evaluation pipeline: sampling parameters, the composite score over metrics, test-time control and fine-tuning hyperparameters, and an auxiliary open-vocabulary detection metric.

\subsection{Evaluation Sampling Parameters}\label{app:sampling_parameters}
For Stable Diffusion~3.5\footnote{\href{https://huggingface.co/stabilityai/stable-diffusion-3.5-medium}{https://huggingface.co/stabilityai/stable-diffusion-3.5-medium}} and FLUX.1 [dev]\footnote{\href{https://huggingface.co/black-forest-labs/FLUX.1-dev}{https://huggingface.co/black-forest-labs/FLUX.1-dev}}, we follow the official sampling recommendations. Unless stated otherwise, we use the deterministic Euler scheduler with $28$ inference steps for both models and generate images at $512\times512$ resolution. The classifier-free guidance scale is set to $4.5$ for SD3.5 and $3.5$ for FLUX. To ensure consistent extraction of cross-attention maps, we cap the maximum tokenized sequence length at $77$ for SD3.5 and $256$ for FLUX, and we verify that all prompts in our dataset fall within these limits. Models are loaded and all computations are performed in \texttt{bfloat16} to reduce memory usage.

\subsection{Evaluation Metrics}\label{app:metric}
To summarize each hyperparameter setting with a single scalar, we macro-average the \emph{relative improvement} over the base model across prompts, seeds, and metrics.

Let $X_{p,s}$ be the image produced by the current setting for prompt $p\in P$ and seed $s\in S$, and let $\hat{X}_{p,s}$ be the corresponding image from the base model. Let $\mathcal M$ denote the set of evaluation metrics. Since in our settings all metrics are increasing, the composite score for a hyperparameter setting is the macro-average
\begin{align}
    \frac{1}{|S|}\sum_{s \in S} \frac{1}{|P|}\sum_{p \in P} \frac{1}{|M|}\sum_{m \in M} \frac{m(X_{p, s}) - m(\hat{X}_{p, s})}{m(\hat{X}_{p, s})},
\end{align}
such that a value larger than $0$ indicates an average improvement over the base model, while values smaller than $0$ indicate degradation.

We conducted $160$ test-time control runs and $236$ fine-tuning runs, spanning multiple heuristics and hyperparameters. Detailed per-run results are in \Cref{sec:detailed_results}: test-time control for SD~3.5 in \Cref{tab:metrics_otf_sd3}, for FLUX.1~[dev] in \Cref{tab:metrics_otf_flux}, and for Stable Diffusion~1.5 in \Cref{tab:metrics_otf_sd1}; fine-tuning results for SD~3.5 in \Cref{tab:metrics_fine_sd3_1,tab:metrics_fine_sd3_2,tab:metrics_fine_sd3_3} and for FLUX.1~[dev] in \Cref{tab:metrics_fine_flux_1,tab:metrics_fine_flux_2,tab:metrics_fine_flux_3}.

\subsection{Test-Time Control: Deterministic Sampling (ODE)}
Many off-the-shelf T2I models are optimized for deterministic ODE sampling (\ie{,} $\sigma \equiv 0$). In this setting we view steering as an optimal control problem over the sampler ODE:
\begin{align}
    \min_{u} \;\; \E\left[\int_0^1 \frac{1}{2}\|u(X_t,t)\|_2^2 + f(X_t,t)\,dt \right]
    \quad \text{s.t.} \quad
    \dot X_t = v_{\text{base}}(X_t,t) + u(X_t,t), \;\; X_0 \sim \pi_0,
    \label{eq:ode_soc}
\end{align}
where $v_{\text{base}}$ is the base FM/diffusion velocity parameterization and $f$ is a disentanglement running cost (\eg{,} \textsc{FOCUS}).

Define the Hamiltonian
\begin{align}
    \mathcal H(x,u,a,t) \;=\; \frac{1}{2}\|u\|_2^2 + f(x,t) + a^\top\!\big(v_{\text{base}}(x,t)+u\big),
\end{align}
with co-state (adjoint) $a(t)\in\R^d$. Since $\mathcal H$ is strictly convex in $u$, the first-order condition gives
\begin{align}
    \nabla_u \mathcal H = u + a = 0
    \quad \Rightarrow \quad
    u_t^\star = -a(t).
    \label{eq:ode_u_star}
\end{align}
Computing $u_t^\star$ exactly would require integrating \cref{eq:adjoint} backward from $t=1$, which depends on future states and is incompatible with a single forward sampling pass. We therefore use a local approximation analogous to the SDE case: we evaluate gradients at the current state and drop the Jacobian term,
$\nabla_x v_{\text{base}}(X_t,t)^\top a(t)\approx 0$. This yields
\begin{align}
    \dot a(t) \approx -\nabla_x f(X_t,t),
    \qquad a(1)=0,
\end{align}
so that
\begin{align}
    a(t)
    \approx \int_t^1 \nabla_x f(X_t,\tau)\,d\tau
    \approx (1-t)\,\nabla_x f(X_t,t),
    \label{eq:ode_a_approx}
\end{align}
where the last step is a left-Riemann approximation (treating the future state locally constant). Substituting \cref{eq:ode_a_approx} into \cref{eq:ode_u_star} gives the instantaneous control
\begin{align}
    u_t^\star \approx -(1-t)\,\nabla_x f(X_t,t).
\end{align}
Therefore the controlled ODE uses the guided velocity
\begin{align}
    v_t^\star
    = v_{\text{base}}(X_t,t) + u_t^\star
    \approx v_{\text{base}}(X_t,t) - (1-t)\,\nabla_x f(X_t,t).
    \label{eq:ode_guided_velocity}
\end{align}
With an Euler integrator of step size $h$, this corresponds to
\begin{align}
    X_{t+h}
    = X_t + h\,v_t^\star
    \approx X_t + h\,v_{\text{base}}(X_t,t) - h(1-t)\,\nabla_x f(X_t,t),
\end{align}
which matches the standard latent-gradient guidance template with an explicit time schedule $\eta(t)=1-t$ (up to the global scaling of $f$ by $\lambda$). Dropping the Jacobian term $\nabla_x v_{\text{base}}^\top a$ is a common online-control approximation; we empirically find it yields a stable single-pass controller while keeping overhead low \cite{havens_adjoint_2025}.

\subsection{Test-Time Control: Hyperparameters}
In the deterministic (ODE) variant, the single-pass update does not inherit the time–weighting $\frac{1}{2}\sigma^2_\text{mem}(t)$ that appears in the SDE case. Since $\sigma_{\mathrm{mem}}(t)$ is large at early times and decays rapidly as $t \to 1$, we reintroduce this desirable early–strong / late–weak behavior in the ODE setting by reweighting the running cost:
\begin{align}
    f(X_t, t) = \lambda \cdot \sigma^2_\text{mem}(t) \cdot \mathrm{Heuristic}(X_t),\label{eq:impl_update}
\end{align}
where $\lambda > 0$ is the earlier introduced hyperparameter to account for different heuristic magnitudes. Throughout our test-time control experiments, we use this time-weighted running cost variant and sweep over $\lambda \in \{0.1, 0.5, 1, 2, 3, 4, 8, 12, 16, 32\}$. Values below $0.1$ have negligible effect across heuristics, while values above $32$ tend to produce artifacts (over-sharpening, texture noise) or occasional numerical instabilities (NaNs). See \cref{fig:lambda_example} for qualitative trends.

\begin{figure}[H]
    \centering
    {\setlength{\tabcolsep}{.3mm}%
    \resizebox{\linewidth}{!}{
    \begin{tabular}{*{10}{c}} 
        Base & $\lambda = 0.1$ & $\lambda = 0.5$ & $\lambda = 1$ &$\lambda = 2$ & $\lambda = 3$ & $\lambda = 4$ & $\lambda = 8$ & $\lambda = 16$ & $\lambda = 32$ \\
        \includegraphics[width=0.1\linewidth]{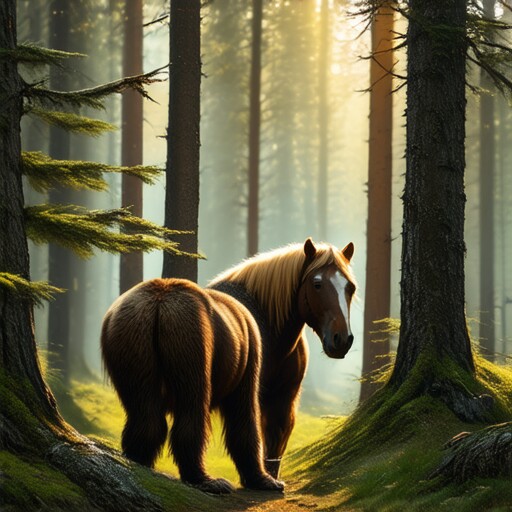} &
        \includegraphics[width=0.1\linewidth]{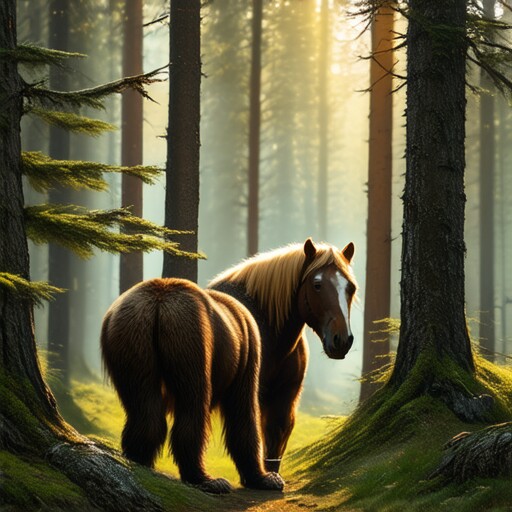} &
        \includegraphics[width=0.1\linewidth]{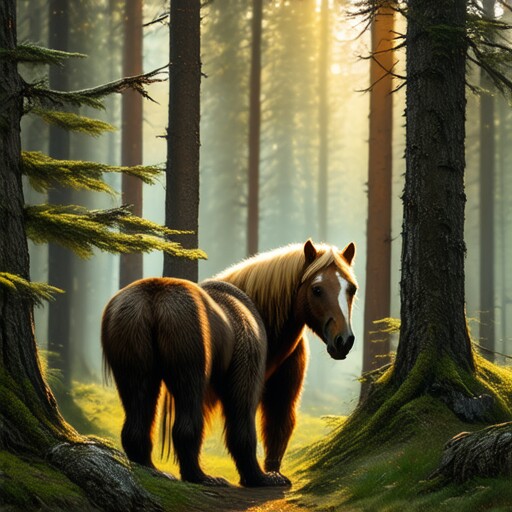} &
        \includegraphics[width=0.1\linewidth]{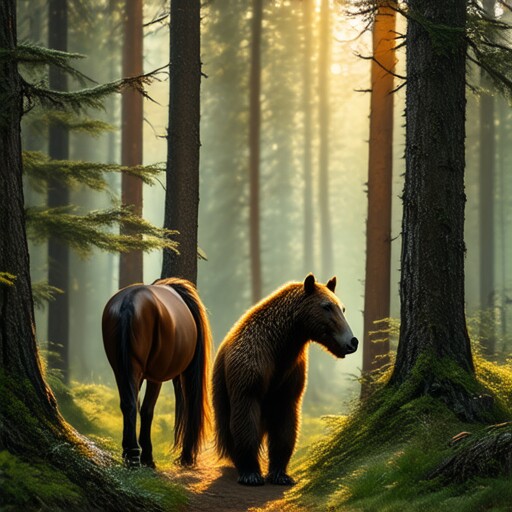} &
        \includegraphics[width=0.1\linewidth]{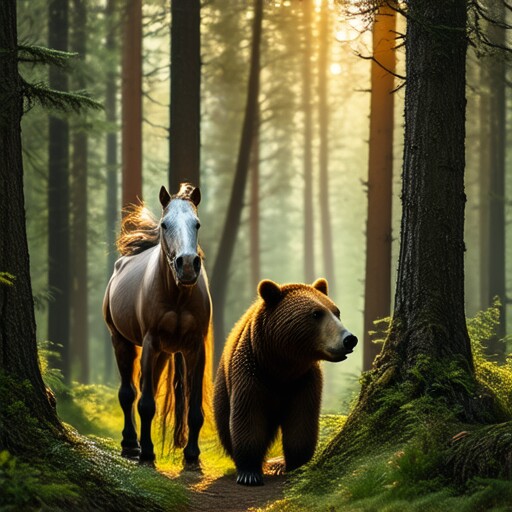} &
        \includegraphics[width=0.1\linewidth]{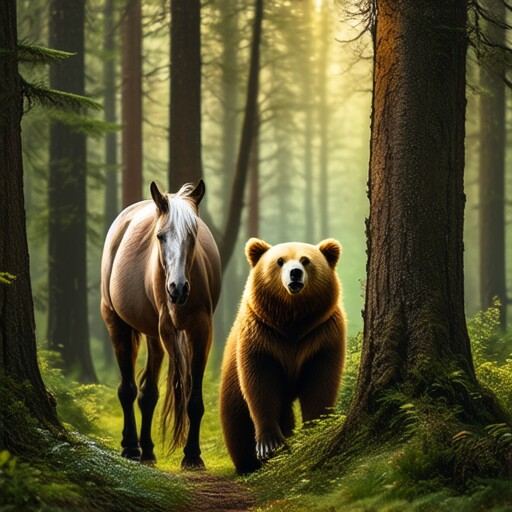} &
        \includegraphics[width=0.1\linewidth]{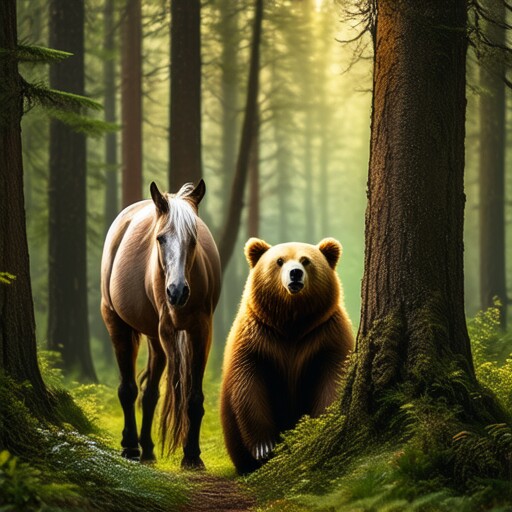} &
        \includegraphics[width=0.1\linewidth]{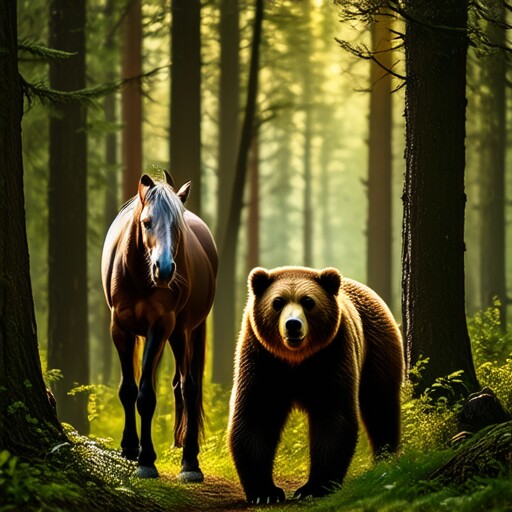} &
        \includegraphics[width=0.1\linewidth]{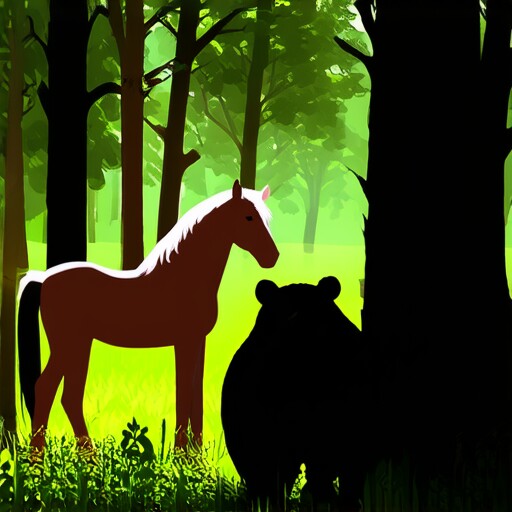} &
        \includegraphics[width=0.1\linewidth]{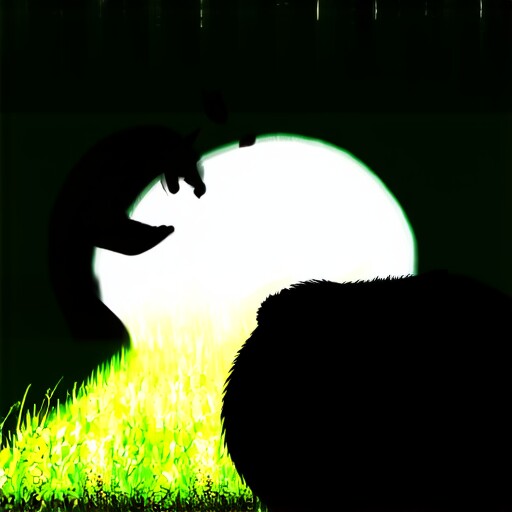}
    \end{tabular}
    }}
    \caption{Effect of the control parameter $\lambda$ on test-time control with Stable Diffusion 3.5.}
    \label{fig:lambda_example}
\end{figure}

\subsection{Fine-tuning Hyperparameters}
We initialize the \emph{memoryless} schedule from each model’s ODE 28-step inference schedule (same time steps), do not use classifier-free guidance, and for \textbf{FLUX.1} apply its native guidance scale (not CFG). Following \cref{app:sampling_parameters}, we cap tokenized sequence length for cross-attention extraction to 77 (SD~3.5) and 256 (FLUX.1). Models are loaded in \texttt{bfloat16}; forward/backward passes run in BF16 and the final loss reduction is computed in FP32 to avoid numerical issues. To reduce memory, at each iteration we subsample 16 of the 28 steps to be used in our loss calculation. We further use a batch sizes of 5 trajectories for SD~3.5 and 2 trajectories for FLUX.1. We use two small prompt sets: 1 (single prompt: “A horse and a bear”) and 15 (each with two semantically similar subjects). Optimization uses AdamW with a weight decay of 0.01 and $\beta_0 = 0.95$, $\beta_1=0.999$. In addition, we also employ \texttt{Accelerate} to lower peak memory consumption. \Cref{tab:hyperparameters} lists the hyperparameter grids we sweep per heuristic; best settings are \textbf{bold}. 

\begin{table*}[thb]
    \centering \small
    \renewcommand{\arraystretch}{1.05}
    \caption{Hyperparameter grids for fine-tuning; best settings per row in \textbf{bold}.}
    \begin{tabular}{@{}c<{\enspace}@{}lcccccccc@{}}
    \toprule
    & \textbf{Heuristic}
    & \textbf{Lambda $\lambda$}
    & \textbf{Learning rate }
    & \textbf{Checkpoint}
    & \textbf{\#Prompts} \\
    \midrule
    \multirow{4}{*}{\rotatebox{90}{\textbf{SD 3.5}}}
    & Attend\&Excite & \{0.1, 1, \textbf{10}\} & $5\mathrm{e}{-5}$ & \{\textbf{100}, 150 \} & 1\\
    & CONFORM & \{\textbf{0.1}, 1, 10\} & $5\mathrm{e}{-5}$ & \{\textbf{100}, 150 \} & 1\\
    & Divide\&Bind & \{\textbf{0.1}, 1, 10\} & $5\mathrm{e}{-5}$ & \{\textbf{100}, 150 \} & 1\\
    & Self-Cross Guidance & \{0.1, 1, \textbf{10}\} & $5\mathrm{e}{-5}$ & \{\textbf{100}, 150 \} & \textbf{15}\\
    & FOCUS & \{0.01, 0.1, \textbf{1}, 10, 100\} & \{$1\mathrm{e}{-4}$, \bm{$5\mathrm{e}{-5}$}, $1\mathrm{e}{-5}$\} & \{\textbf{100}, 150, 200\} & \{\textbf{1}, 15, 150\}\\
    \midrule
    \multirow{4}{*}{\rotatebox{90}{\textbf{FLUX.1}}}
    & Attend\&Excite & \{0.1, 1, \textbf{10}\} & $5\mathrm{e}{-5}$ & \{\textbf{200}, 250 \} & 1\\
    & CONFORM & \{0.1, 1, \textbf{10}\} & $5\mathrm{e}{-5}$ & \{\textbf{200}, 250 \} & 1\\
    & Divide\&Bind & \{\textbf{0.1}, 1, 10\} & $5\mathrm{e}{-5}$ & \{\textbf{200}, 250 \} & 1\\
    & Self-Cross Guidance & \{\textbf{0.1}, 1, 10\} & $5\mathrm{e}{-5}$ & \{\textbf{200}, 250 \} & \{1, \textbf{15}\}\\
    & FOCUS & \{0.01, 0.1, 1, 10, \textbf{100}\} & \{$1\mathrm{e}{-4}$, \bm{$5\mathrm{e}{-5}$}, $1\mathrm{e}{-5}$\} & \{200, \textbf{250}, 300\} & \{\textbf{1}, 15, 150\}\\
    \bottomrule
    \end{tabular}
    \label{tab:hyperparameters}
\end{table*}

\subsection{Additional Metric: Open-Vocabulary Detection}
As a complementary metric, we assess \emph{subject presence} with OWL-V2 open-vocabulary detection \cite{minderer_scaling_2024}. For each prompt, we pass the subject strings as class queries and count an image as correct if \emph{all} subjects are detected at least once. We report the fraction of images meeting this criterion.

Results for test-time control and fine-tuned models are shown in \Cref{tab:OWL_otf,tab:OWL_fine}. Both control algorithms increase subject presence over the base model. However, OWL-V2 is blind to attribute leakage and subject numerosity (it does not verify attributes or counts), so we exclude it from the main evaluation and report it only as a supportive metric here.

\begin{table}[t]
    \centering
    \hfill
    \begin{minipage}[t]{0.45\linewidth}
        \centering
        \caption{OWL-V2 subject presence [\%] under test-time control. For each heuristic, we report the hyperparameter run with the highest composite score, see \cref{tab:metrics_otf_sd3,tab:metrics_otf_flux} for details.}
        \label{tab:OWL_otf}
        \begin{tabular}{@{}lccc@{}}
            \toprule
            \textbf{Heuristic} & \textbf{SD3.5} & \textbf{FLUX}\\
            \midrule
            Base & 69.33\% & 66.93\%\\
            Attend\&Excite & \cellcolor{bronze!20}72.13\% & 66.80\% \\
            CONFORM & \cellcolor{gold!20}\textbf{77.20}\% &\cellcolor{bronze!20} 67.87\% \\
            Divide\&Bind & 70.80\% & \cellcolor{gold!20}\textbf{68.53\%} \\
            FOCUS (Ours) & \cellcolor{silver!20}74.27\% & \cellcolor{silver!20}68.27\% \\
            \bottomrule
        \end{tabular}
    \end{minipage}\hfill
    \begin{minipage}[t]{0.45\linewidth}
      \centering
      \caption{OWL-V2 subject presence [\%] under fine-tuned models. For each heuristic, we report the hyperparameter run with the highest composite score, see \cref{tab:metrics_fine_sd3_1,tab:metrics_fine_sd3_2,tab:metrics_fine_sd3_3,tab:metrics_fine_flux_1,tab:metrics_fine_flux_2,tab:metrics_fine_flux_3} for details.}
      \label{tab:OWL_fine}
      \begin{tabular}{@{}lccc@{}}
            \toprule
            \textbf{Heuristic} & \textbf{SD3.5} & \textbf{FLUX}\\
            \midrule
            Base & 69.33\% & 66.93\%\\
            Attend\&Excite & \cellcolor{gold!20}\textbf{80.40\%} & \cellcolor{gold!20}\textbf{74.93\%}\\
            CONFORM & \cellcolor{bronze!20}77.73\% & \cellcolor{bronze!20}72.53\% \\
            Divide\&Bind & 73.33\% & 63.87\% \\
            FOCUS (Ours) & \cellcolor{silver!20}78.53\% & \cellcolor{silver!20}74.66\% \\
            \bottomrule
        \end{tabular}
    \end{minipage}
    \hfill
\end{table}

\section{Human Study}\label{app:human_study}
We test whether automatic metric gains align with human preferences via a prompt-conditioned, pairwise preference study.

\begin{figure}[htb]
    \centering
    \includegraphics[width=0.75\linewidth]{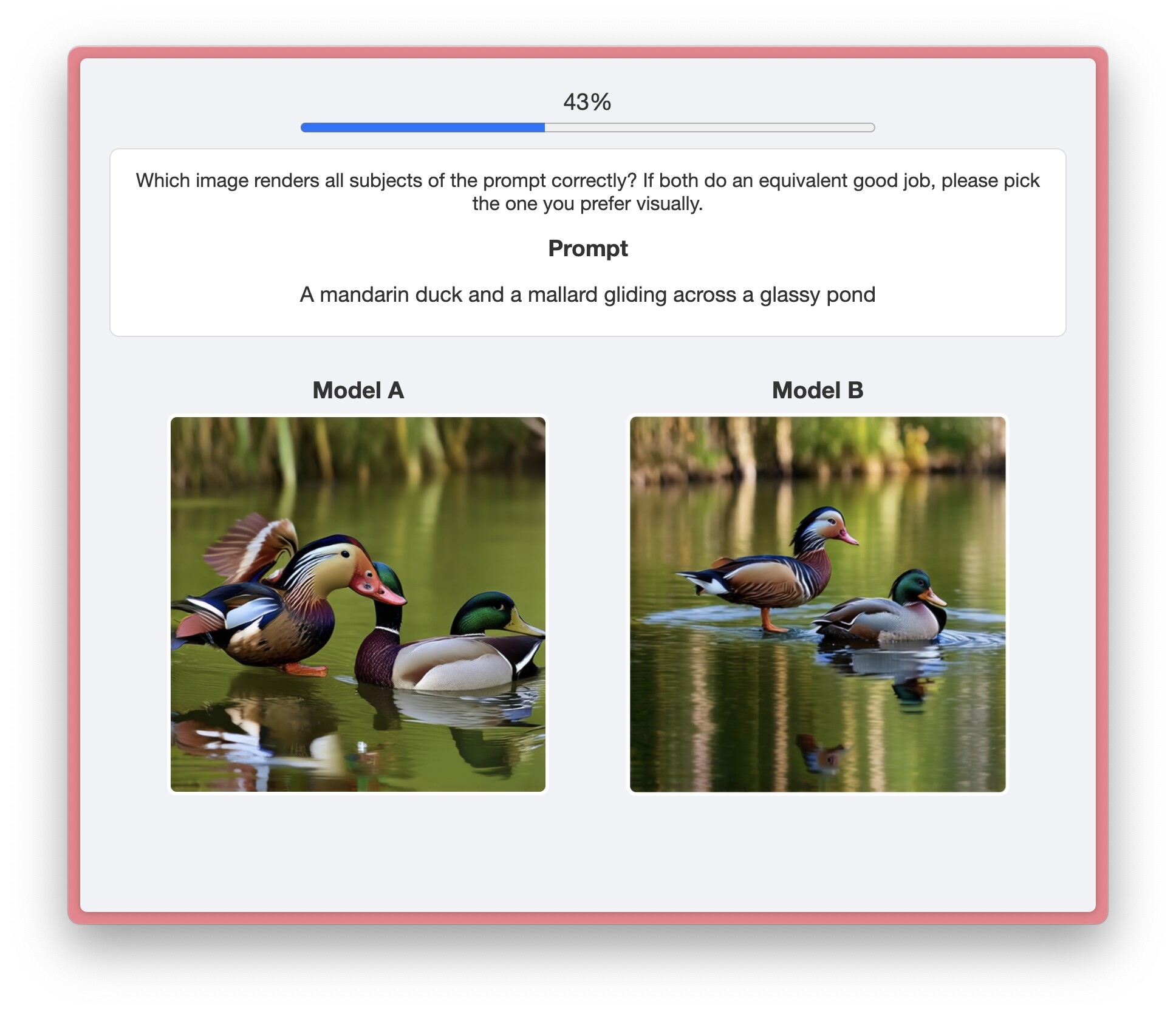}
    \caption{User interface for the prompt-conditioned, pairwise preference study.}
    \label{fig:human_study}
\end{figure}

\subsection{Setup}
We evaluate whether metric gains translate to human preferences. Fifty participants each completed $40$ prompt-conditioned, pairwise trials, resulting in $2{,}000$ total judgments. In every trial, two images generated from the \emph{same} prompt were shown side by side with the prompt; participants selected the image that better matched the prompt. The instruction shown was:
\begin{quote}
    \textit{``Which image renders all subjects of the prompt correctly? If both do an equivalent good job, please pick the one you prefer visually.''}
\end{quote}
To ensure sufficient rating density, we fixed the sampling seed to $0$, yielding one image per method–prompt pair (pool of 150 prompts). Trials were balanced across backbone and setting: SD~3.5 vs.\ FLUX.1 and test-time control vs.\ fine-tuning each accounted for one quarter of the comparisons per participant. A screenshot of the interface is shown in \cref{fig:human_study}.

\subsection{Elo Rating Computation}
We compute Elo ratings from the pairwise outcomes to obtain an across-method ranking, alongside win rates (fraction of pairwise wins). Elo is initialized at $1500$ for all candidates and updated after each comparison with $K{=}32$. For a candidate $A$ with rating $R_A$ matched against $B$ with $R_B$, the expected score is
\begin{align}
    E_A = \frac{1}{1 + 10^{(R_B - R_A)/400}},
\end{align}
and the update is 
\begin{align}
        R_A' = R_A + K \left( S_A - E_A \right)
\end{align}
where $S_A = 1$ for a win, $0$ for a loss, and $0.5$ for a draw. Higher Elo indicates stronger preference relative to alternatives. Win rate is reported as the proportion of head-to-head wins.

\section{StableDiffusion 1.5}
Although our algorithms are derived for flow matching, \cref{app:denoising} shows how classical denoising diffusion can be cast in the same framework. To test transferability, we apply the \emph{test-time controller} to \emph{Stable Diffusion~1.5} (SD~1.5)\footnote{\href{https://huggingface.co/stable-diffusion-v1-5/stable-diffusion-v1-5}{https://huggingface.co/stable-diffusion-v1-5/stable-diffusion-v1-5}} \cite{rombach_high-resolution_2022}, a U-Net–based denoising diffusion model, and outline the minimal implementation changes below.

\subsection{Implementation: U-Net Architecture}
Diffusion Transformer (DiT) backbones treat image and text as token sequences processed by stacked Transformer blocks; image–text interaction arises via self-attention of a fixed spatial size. In SD~1.5, based on a U-Net, the latent is downsampled and upsampled through multiple stages, and text conditioning is injected via cross-attention at several resolutions. Consequently, cross-attention maps have \emph{different spatial sizes} across the network.

To obtain a single subject-specific map per prompt step, we collect \emph{all} cross-attention maps (across down/upsampling and bottleneck), bilinearly resize each to $16\times16$, and average them (over heads and layers). This preserves signal from every stage while standardizing spatial shape. Prior works often use only a fixed-resolution subset (\eg{,} bottleneck) \cite{meral_conform_2023,chefer_attend-and-excite_2023}; we found the all-maps aggregation simpler and more comparable across methods, which is the goal of this transfer study.

\subsection{Empirical Results}
We reuse the main evaluation pipeline for SD~1.5, sweeping $10$ values of $\lambda$ and using the same update schedule as in \Cref{eq:impl_update}. A qualitative example is shown in \Cref{fig:otf_sd1_teaser}, with more in \Cref{fig:otf_sd1}. Metric results at each heuristic’s best $\lambda$ appear in \Cref{tab:metrics_otf_sd1_teaser}, with the full sweep in \Cref{tab:metrics_otf_sd1}.

Across metrics and prompts, all test-time heuristics improve over the base SD~1.5 model, confirming that our formulation transfers to denoising diffusion. \textsc{FOCUS} achieves the highest composite score and ranks among the top methods on most individual metrics, while preserving the base style and reducing attribute leakage in the qualitative results.

\begin{figure}[htb]
  \centering
  {\setlength{\tabcolsep}{.3mm}%
  \resizebox{0.95\linewidth}{!}{%
    \begin{tabular}{@{}ccccccc@{}}
      \textbf{Base} & \textbf{Attend\&Excite} & \textbf{CONFORM} & \textbf{Divide\&Bind} & \textbf{FOCUS (Ours)} \\
      \QUINTAND{SD1-OTF}{1}\\
    \end{tabular}%
  }}
    \caption{Stable Diffusion~1.5 samples with test-time control. All heuristics shown at their optimal $lambda$. The prompt is ``A red fox and an arctic fox sitting side by side in tall grass''.}
  \label{fig:otf_sd1_teaser}
\end{figure}

\begin{table*}[t]
    \caption{Test-time control results for Stable Diffusion~1.5. We report mean $\pm$ std over all prompts and seeds; the top three values per metric are highlighted (gold/silver/bronze). Each method uses the same sampling/evaluation pipeline and its optimal $\lambda$.}
    \vspace{-1.5mm}
    \label{tab:metrics_otf_sd1_teaser}
    \centering
    \resizebox{\textwidth}{!}{
    \renewcommand{\arraystretch}{1.05}
    \setlength{\tabcolsep}{1.5mm}%
	\begin{tabular}{@{}c<{\enspace}@{}lcccccc|cc@{}}\toprule
    & \textbf{Heuristic} & \textbf{CLIP  I-T$\uparrow$} & \textbf{SigLIP-2 I-T$\uparrow$} & \textbf{BLIP T-T$\uparrow$} & \textbf{Qwen2 T-T$\uparrow$} & \textbf{PickScore I-T$\uparrow$} & \textbf{ImgRew I-T$\uparrow$} & \textbf{Composite$\uparrow$} \\
    \midrule
    \multirow{5}{*}{\rotatebox{90}{\textbf{SD 1.5}}}
    & Base & 0.3304\scriptsize$\pm$0.03 & 0.1892\scriptsize$\pm$0.04 & \cellcolor{bronze!20}0.5433\scriptsize$\pm$0.14 & 0.5742\scriptsize$\pm$0.09 & 21.2563\scriptsize$\pm$0.99 & -0.0763\scriptsize$\pm$1.00 & 0.0000\scriptsize$\pm$0.00 \\
    & Attend\&Excite & 0.3321\scriptsize$\pm$0.03 & \cellcolor{gold!20}\textbf{0.1964}\scriptsize$\pm$0.04 & 0.5415\scriptsize$\pm$0.15 & 0.5803\scriptsize$\pm$0.09 & 21.2082\scriptsize$\pm$1.02 & 0.0491\scriptsize$\pm$1.05 & 780.5529\scriptsize$\pm$22303.71 \\
    & CONFORM & \cellcolor{silver!20}0.3348\scriptsize$\pm$0.03 & \cellcolor{bronze!20}0.1954\scriptsize$\pm$0.04 & \cellcolor{gold!20}\textbf{0.5449}\scriptsize$\pm$0.14 & \cellcolor{silver!20}0.5834\scriptsize$\pm$0.09 & \cellcolor{gold!20}\textbf{21.3235}\scriptsize$\pm$0.98 & \cellcolor{silver!20}0.1137\scriptsize$\pm$0.99 & \cellcolor{silver!20}1842.0251\scriptsize$\pm$50959.48 \\
    & Divide\&Bind & \cellcolor{gold!20}\textbf{0.3350}\scriptsize$\pm$0.03 & \cellcolor{silver!20}0.1961\scriptsize$\pm$0.04 & \cellcolor{silver!20}0.5441\scriptsize$\pm$0.14 & \cellcolor{bronze!20}0.5817\scriptsize$\pm$0.09 & \cellcolor{silver!20}21.3085\scriptsize$\pm$0.99 & \cellcolor{bronze!20}0.1081\scriptsize$\pm$1.02 & \cellcolor{bronze!20}1794.9343\scriptsize$\pm$49951.59 \\
    & FOCUS(Ours) & \cellcolor{bronze!20}0.3327\scriptsize$\pm$0.03 & 0.1952\scriptsize$\pm$0.04 & 0.5429\scriptsize$\pm$0.14 & \cellcolor{gold!20}\textbf{0.5834}\scriptsize$\pm$0.09 & \cellcolor{bronze!20}21.2977\scriptsize$\pm$0.99 & \cellcolor{gold!20}\textbf{0.1173}\scriptsize$\pm$1.02 & \cellcolor{gold!20}\textbf{1862.0311}\scriptsize$\pm$51706.75 \\
    \bottomrule
  \end{tabular}}
\end{table*}

\section{Self–Cross Guidance (SCG)}
Self–Cross Guidance (SCG) \cite{qiu2025self} is, to our knowledge, the only prior heuristic explicitly developed for multi-subject disentanglement on modern DiT-based T2I models (\eg{,} SD~3/3.5). We therefore include a careful implementation and comparison to \textsc{FOCUS}.

Unlike cross-attention–only objectives, SCG exploits \emph{two} internal signals: (i) text-to-image cross-attention and (ii) image-to-image self-attention. Intuitively, the self-attention term is used to decorrelate image features associated with different subject tokens while the cross-attention term promotes subject binding. This additional signal distinguishes SCG from other heuristics we evaluate.

\subsection{Evaluation in Our Pipeline}
Under our standard test-time control protocol, SCG marginally exceeds \textsc{FOCUS} on the composite score for SD~3.5. Qualitatively, however, we frequently observe side effects consistent with stronger separation pressure: a tendency toward stylized (cartoon-like) textures and occasional numerosity artifacts (\eg{,} producing extra instances rather than cleanly separating two subjects); see \Cref{fig:otf_scg_comparison} and additional examples in \Cref{fig:fine_flux,fig:fine_sd3,fig:otf_flux,fig:otf_sd3}. This observation is consistent with the limitations mentioned by the authors of SCG. By contrast, \textsc{FOCUS} preserves base style more reliably and maintains subject counts, and achieves higher scores than SCG after fine-tuning.

\subsection{Evaluation on the SCG Dataset}
For comparability with \cite{qiu2025self}, we also evaluate on their released prompt suite\footnote{\href{https://github.com/mengtang-lab/selfcross-guidance/blob/main/prompts.txt}{https://github.com/mengtang-lab/selfcross-guidance/blob/main/prompts.txt}} comprising five subsets: SSD-3 (3 similar subjects; 22 prompts), SSD-2 (2 similar subjects; 31 prompts), Animal–Animal (66 prompts), Animal–Object (144 prompts), and Object–Object (66 prompts). Prompts follow fixed templates such as ``a \textit{SUBJECT~A} and a \textit{SUBJECT~B}'' or ``a \textit{SUBJECT~A} with an \textit{OBJECT~B}.''

\paragraph{Test-time control (SD~3.5).} Using our standard 10-point $\lambda$ sweep, both SCG and \textsc{FOCUS} consistently improve over the base model, see \cref{tab:metrics_otf_scg}. SCG leads on SSD-2, while \textsc{FOCUS} attains higher scores on the remaining subsets, yielding overall comparable performance with a slight average advantage for \textsc{FOCUS}.

\begin{table*}[htb]
    \caption{Test-time control results on the SCG Prompt Dataset. We report mean $\pm$ std over all prompts and seeds; the top three values per metric are highlighted (gold/silver/bronze). Each method uses the same sampling/evaluation pipeline and its optimal $\lambda$.}
    \vspace{-1.5mm}
    \label{tab:metrics_otf_scg}
    \centering
    \resizebox{\textwidth}{!}{
    \renewcommand{\arraystretch}{1.05}
    \setlength{\tabcolsep}{1.5mm}%
	\begin{tabular}{@{}l<{\enspace}@{}l|cccccc|cc@{}}\toprule
    \textbf{Dataset} & \textbf{Heuristic} & \textbf{CLIP  I-T$\uparrow$} & \textbf{SigLIP-2 I-T$\uparrow$} & \textbf{BLIP T-T$\uparrow$} & \textbf{Qwen2 T-T$\uparrow$} & \textbf{PickScore I-T$\uparrow$} & \textbf{ImgRew I-T$\uparrow$} & \textbf{Composite$\uparrow$} \\
    \midrule
    \multirow{3}{*}{\textbf{SSD-3}}  & Base & \cellcolor{bronze!20}0.6494\scriptsize$\pm$0.01 & \cellcolor{bronze!20}0.5935\scriptsize$\pm$0.01 & \cellcolor{silver!20}0.7886\scriptsize$\pm$0.07 & \cellcolor{bronze!20}0.7996\scriptsize$\pm$0.04 & \cellcolor{silver!20}0.2919\scriptsize$\pm$0.11 & \cellcolor{bronze!20}0.7464\scriptsize$\pm$0.19 & \cellcolor{bronze!20}0.0000\scriptsize$\pm$0.00 \\
    & Self-Cross Guidance & \cellcolor{silver!20}0.6504\scriptsize$\pm$0.01 & \cellcolor{silver!20}0.5946\scriptsize$\pm$0.01 & \cellcolor{gold!20}\textbf{0.7969}\scriptsize$\pm$0.07 & \cellcolor{gold!20}\textbf{0.8035}\scriptsize$\pm$0.04 & \cellcolor{bronze!20}0.2914\scriptsize$\pm$0.11 & \cellcolor{silver!20}0.7689\scriptsize$\pm$0.18 & \cellcolor{silver!20}60.6311\scriptsize$\pm$394.49 \\
    & FOCUS(Ours) & \cellcolor{gold!20}\textbf{0.6506}\scriptsize$\pm$0.01 & \cellcolor{gold!20}\textbf{0.5957}\scriptsize$\pm$0.01 & \cellcolor{bronze!20}0.7882\scriptsize$\pm$0.08 & \cellcolor{silver!20}0.7999\scriptsize$\pm$0.04 & \cellcolor{gold!20}\textbf{0.2928}\scriptsize$\pm$0.12 & \cellcolor{gold!20}\textbf{0.7915}\scriptsize$\pm$0.15 & \cellcolor{gold!20}\textbf{82.1554}\scriptsize$\pm$455.15 \\
    \midrule
    \multirow{3}{*}{\textbf{SSD-2}}  & Base & \cellcolor{bronze!20}0.6477\scriptsize$\pm$0.01 & \cellcolor{bronze!20}0.5888\scriptsize$\pm$0.02 & \cellcolor{bronze!20}0.7570\scriptsize$\pm$0.08 & \cellcolor{bronze!20}0.7904\scriptsize$\pm$0.05 & \cellcolor{bronze!20}0.2495\scriptsize$\pm$0.10 & \cellcolor{bronze!20}0.6620\scriptsize$\pm$0.19 & \cellcolor{bronze!20}0.0000\scriptsize$\pm$0.00 \\
    & Self-Cross Guidance & \cellcolor{silver!20}0.6500\scriptsize$\pm$0.01 & \cellcolor{silver!20}0.5947\scriptsize$\pm$0.01 & \cellcolor{silver!20}0.7598\scriptsize$\pm$0.07 & \cellcolor{gold!20}\textbf{0.7958}\scriptsize$\pm$0.05 & \cellcolor{gold!20}\textbf{0.2564}\scriptsize$\pm$0.11 & \cellcolor{silver!20}0.7277\scriptsize$\pm$0.17 & \cellcolor{gold!20}\textbf{148.2953}\scriptsize$\pm$447.50 \\
    & FOCUS(Ours) & \cellcolor{gold!20}\textbf{0.6505}\scriptsize$\pm$0.01 & \cellcolor{gold!20}\textbf{0.5958}\scriptsize$\pm$0.01 & \cellcolor{gold!20}\textbf{0.7625}\scriptsize$\pm$0.08 & \cellcolor{silver!20}0.7926\scriptsize$\pm$0.05 & \cellcolor{silver!20}0.2531\scriptsize$\pm$0.11 & \cellcolor{gold!20}\textbf{0.7279}\scriptsize$\pm$0.17 & \cellcolor{silver!20}144.9927\scriptsize$\pm$522.25 \\
    \midrule
    \multirow{3}{*}{\textbf{Animal---Animal}}  & Base & \cellcolor{bronze!20}0.6580\scriptsize$\pm$0.01 & \cellcolor{bronze!20}0.6036\scriptsize$\pm$0.01 & \cellcolor{bronze!20}0.8656\scriptsize$\pm$0.07 & \cellcolor{bronze!20}0.8156\scriptsize$\pm$0.03 & \cellcolor{bronze!20}0.4239\scriptsize$\pm$0.12 & \cellcolor{bronze!20}0.9020\scriptsize$\pm$0.13 & \cellcolor{bronze!20}0.0000\scriptsize$\pm$0.00 \\
    & Self-Cross Guidance & \cellcolor{silver!20}0.6583\scriptsize$\pm$0.01 & \cellcolor{silver!20}0.6039\scriptsize$\pm$0.01 & \cellcolor{gold!20}\textbf{0.8693}\scriptsize$\pm$0.07 & \cellcolor{gold!20}\textbf{0.8170}\scriptsize$\pm$0.03 & \cellcolor{gold!20}\textbf{0.4316}\scriptsize$\pm$0.11 & \cellcolor{silver!20}0.9101\scriptsize$\pm$0.12 & \cellcolor{silver!20}35.9864\scriptsize$\pm$343.01 \\
    & FOCUS(Ours) & \cellcolor{gold!20}\textbf{0.6586}\scriptsize$\pm$0.01 & \cellcolor{gold!20}\textbf{0.6041}\scriptsize$\pm$0.01 & \cellcolor{silver!20}0.8667\scriptsize$\pm$0.07 & \cellcolor{silver!20}0.8168\scriptsize$\pm$0.03 & \cellcolor{silver!20}0.4298\scriptsize$\pm$0.12 & \cellcolor{gold!20}\textbf{0.9177}\scriptsize$\pm$0.10 & \cellcolor{gold!20}\textbf{41.9020}\scriptsize$\pm$418.06 \\
    \midrule
    \multirow{3}{*}{\textbf{Animal---Object}}  & Base & \cellcolor{gold!20}\textbf{0.6710}\scriptsize$\pm$0.01 & \cellcolor{bronze!20}0.6101\scriptsize$\pm$0.01 & \cellcolor{bronze!20}0.8941\scriptsize$\pm$0.06 & \cellcolor{bronze!20}0.8415\scriptsize$\pm$0.04 & \cellcolor{bronze!20}0.5234\scriptsize$\pm$0.15 & \cellcolor{bronze!20}0.9344\scriptsize$\pm$0.12 & \cellcolor{bronze!20}0.0000\scriptsize$\pm$0.00 \\
    & Self-Cross Guidance & \cellcolor{bronze!20}0.6708\scriptsize$\pm$0.01 & \cellcolor{gold!20}\textbf{0.6101}\scriptsize$\pm$0.01 & \cellcolor{silver!20}0.8954\scriptsize$\pm$0.06 & \cellcolor{gold!20}\textbf{0.8444}\scriptsize$\pm$0.03 & \cellcolor{silver!20}0.5293\scriptsize$\pm$0.14 & \cellcolor{silver!20}0.9361\scriptsize$\pm$0.12 & \cellcolor{silver!20}19.2260\scriptsize$\pm$254.76 \\
    & FOCUS(Ours) & \cellcolor{silver!20}0.6710\scriptsize$\pm$0.01 & \cellcolor{silver!20}0.6101\scriptsize$\pm$0.01 & \cellcolor{gold!20}\textbf{0.8983}\scriptsize$\pm$0.05 & \cellcolor{silver!20}0.8440\scriptsize$\pm$0.03 & \cellcolor{gold!20}\textbf{0.5308}\scriptsize$\pm$0.14 & \cellcolor{gold!20}\textbf{0.9416}\scriptsize$\pm$0.11 & \cellcolor{gold!20}\textbf{35.3828}\scriptsize$\pm$317.06 \\
    \midrule
    \multirow{3}{*}{\textbf{Object---Object}}  & Base & \cellcolor{bronze!20}0.6754\scriptsize$\pm$0.01 & \cellcolor{bronze!20}0.6136\scriptsize$\pm$0.02 & \cellcolor{bronze!20}0.9003\scriptsize$\pm$0.06 & \cellcolor{bronze!20}0.8519\scriptsize$\pm$0.04 & \cellcolor{bronze!20}0.5409\scriptsize$\pm$0.16 & \cellcolor{bronze!20}0.9197\scriptsize$\pm$0.15 & \cellcolor{bronze!20}0.0000\scriptsize$\pm$0.00 \\
    & Self-Cross Guidance & \cellcolor{silver!20}0.6757\scriptsize$\pm$0.01 & \cellcolor{gold!20}\textbf{0.6158}\scriptsize$\pm$0.02 & \cellcolor{silver!20}0.9053\scriptsize$\pm$0.06 & \cellcolor{silver!20}0.8543\scriptsize$\pm$0.04 & \cellcolor{silver!20}0.5414\scriptsize$\pm$0.16 & \cellcolor{silver!20}0.9320\scriptsize$\pm$0.12 & \cellcolor{silver!20}37.8421\scriptsize$\pm$507.36 \\
    & FOCUS(Ours) & \cellcolor{gold!20}\textbf{0.6762}\scriptsize$\pm$0.01 & \cellcolor{silver!20}0.6154\scriptsize$\pm$0.02 & \cellcolor{gold!20}\textbf{0.9060}\scriptsize$\pm$0.06 & \cellcolor{gold!20}\textbf{0.8562}\scriptsize$\pm$0.04 & \cellcolor{gold!20}\textbf{0.5529}\scriptsize$\pm$0.16 & \cellcolor{gold!20}\textbf{0.9380}\scriptsize$\pm$0.11 & \cellcolor{gold!20}\textbf{71.5782}\scriptsize$\pm$434.59 \\
    \bottomrule
  \end{tabular}}
\end{table*}

\begin{table*}[htb]
    \caption{Fine-tuned models evaluate on the SCG Prompt Dataset. We report mean $\pm$ std over all prompts and seeds; the top three values per metric are highlighted (gold/silver/bronze). Each method uses the same sampling/evaluation pipeline and its optimal $\lambda$.}
    \vspace{-1.5mm}
    \label{tab:metrics_fine_scg}
    \centering
    \resizebox{\textwidth}{!}{
    \renewcommand{\arraystretch}{1.05}
    \setlength{\tabcolsep}{1.5mm}%
	\begin{tabular}{@{}l<{\enspace}@{}l|cccccc|cc@{}}\toprule
    \textbf{Dataset} & \textbf{Heuristic} & \textbf{CLIP  I-T$\uparrow$} & \textbf{SigLIP-2 I-T$\uparrow$} & \textbf{BLIP T-T$\uparrow$} & \textbf{Qwen2 T-T$\uparrow$} & \textbf{PickScore I-T$\uparrow$} & \textbf{ImgRew I-T$\uparrow$} & \textbf{Composite$\uparrow$} \\
    \midrule
    \multirow{3}{*}{\textbf{SSD-3}}  & Base & \cellcolor{bronze!20}0.6494\scriptsize$\pm$0.01 & \cellcolor{bronze!20}0.5935\scriptsize$\pm$0.01 & \cellcolor{bronze!20}0.7886\scriptsize$\pm$0.07 & \cellcolor{bronze!20}0.7996\scriptsize$\pm$0.04 & \cellcolor{silver!20}0.2919\scriptsize$\pm$0.11 & \cellcolor{bronze!20}0.7464\scriptsize$\pm$0.19 & \cellcolor{bronze!20}0.0000\scriptsize$\pm$0.00 \\
    & Self-Cross Guidance & \cellcolor{silver!20}0.6508\scriptsize$\pm$0.01 & \cellcolor{silver!20}0.5977\scriptsize$\pm$0.01 & \cellcolor{silver!20}0.7931\scriptsize$\pm$0.07 & \cellcolor{silver!20}0.8014\scriptsize$\pm$0.04 & \cellcolor{bronze!20}0.2873\scriptsize$\pm$0.11 & \cellcolor{silver!20}0.8139\scriptsize$\pm$0.14 & \cellcolor{silver!20}124.4542\scriptsize$\pm$373.52 \\
    & FOCUS(Ours) & \cellcolor{gold!20}\textbf{0.6526}\scriptsize$\pm$0.01 & \cellcolor{gold!20}\textbf{0.5987}\scriptsize$\pm$0.01 & \cellcolor{gold!20}\textbf{0.7936}\scriptsize$\pm$0.07 & \cellcolor{gold!20}\textbf{0.8069}\scriptsize$\pm$0.03 & \cellcolor{gold!20}\textbf{0.2957}\scriptsize$\pm$0.12 & \cellcolor{gold!20}\textbf{0.8301}\scriptsize$\pm$0.12 & \cellcolor{gold!20}\textbf{180.3947}\scriptsize$\pm$451.87 \\
    \midrule
    \multirow{3}{*}{\textbf{SSD-2}}  & Base & \cellcolor{bronze!20}0.6477\scriptsize$\pm$0.01 & \cellcolor{bronze!20}0.5888\scriptsize$\pm$0.02 & \cellcolor{bronze!20}0.7570\scriptsize$\pm$0.08 & \cellcolor{bronze!20}0.7904\scriptsize$\pm$0.05 & \cellcolor{bronze!20}0.2495\scriptsize$\pm$0.10 & \cellcolor{bronze!20}0.6620\scriptsize$\pm$0.19 & \cellcolor{bronze!20}0.0000\scriptsize$\pm$0.00 \\
    & Self-Cross Guidance & \cellcolor{silver!20}0.6500\scriptsize$\pm$0.01 & \cellcolor{silver!20}0.5976\scriptsize$\pm$0.01 & \cellcolor{silver!20}0.7666\scriptsize$\pm$0.07 & \cellcolor{gold!20}\textbf{0.8006}\scriptsize$\pm$0.05 & \cellcolor{silver!20}0.2520\scriptsize$\pm$0.11 & \cellcolor{silver!20}0.7578\scriptsize$\pm$0.14 & \cellcolor{silver!20}215.1917\scriptsize$\pm$500.96 \\
    & FOCUS(Ours) & \cellcolor{gold!20}\textbf{0.6506}\scriptsize$\pm$0.01 & \cellcolor{gold!20}\textbf{0.5988}\scriptsize$\pm$0.01 & \cellcolor{gold!20}\textbf{0.7775}\scriptsize$\pm$0.07 & \cellcolor{silver!20}0.8000\scriptsize$\pm$0.06 & \cellcolor{gold!20}\textbf{0.2523}\scriptsize$\pm$0.10 & \cellcolor{gold!20}\textbf{0.7618}\scriptsize$\pm$0.13 & \cellcolor{gold!20}\textbf{242.7119}\scriptsize$\pm$499.00 \\
    \midrule
    \multirow{3}{*}{\textbf{Animal---Animal}}  & Base & \cellcolor{silver!20}0.6580\scriptsize$\pm$0.01 & \cellcolor{bronze!20}0.6036\scriptsize$\pm$0.01 & \cellcolor{bronze!20}0.8656\scriptsize$\pm$0.07 & \cellcolor{bronze!20}0.8156\scriptsize$\pm$0.03 & \cellcolor{silver!20}0.4239\scriptsize$\pm$0.12 & \cellcolor{bronze!20}0.9020\scriptsize$\pm$0.13 & \cellcolor{bronze!20}0.0000\scriptsize$\pm$0.00 \\
    & Self-Cross Guidance & \cellcolor{gold!20}\textbf{0.6583}\scriptsize$\pm$0.01 & \cellcolor{gold!20}\textbf{0.6054}\scriptsize$\pm$0.01 & \cellcolor{gold!20}\textbf{0.8803}\scriptsize$\pm$0.05 & \cellcolor{silver!20}0.8236\scriptsize$\pm$0.03 & \cellcolor{gold!20}\textbf{0.4334}\scriptsize$\pm$0.11 & \cellcolor{gold!20}\textbf{0.9345}\scriptsize$\pm$0.05 & \cellcolor{gold!20}\textbf{111.3952}\scriptsize$\pm$372.56 \\
    & FOCUS(Ours) & \cellcolor{bronze!20}0.6576\scriptsize$\pm$0.01 & \cellcolor{silver!20}0.6039\scriptsize$\pm$0.01 & \cellcolor{silver!20}0.8754\scriptsize$\pm$0.06 & \cellcolor{gold!20}\textbf{0.8240}\scriptsize$\pm$0.03 & \cellcolor{bronze!20}0.4003\scriptsize$\pm$0.12 & \cellcolor{silver!20}0.9230\scriptsize$\pm$0.06 & \cellcolor{silver!20}25.9825\scriptsize$\pm$424.15 \\
    \midrule
    \multirow{3}{*}{\textbf{Animal---Object}}  & Base & \cellcolor{silver!20}0.6710\scriptsize$\pm$0.01 & \cellcolor{bronze!20}0.6101\scriptsize$\pm$0.01 & \cellcolor{bronze!20}0.8941\scriptsize$\pm$0.06 & \cellcolor{bronze!20}0.8415\scriptsize$\pm$0.04 & \cellcolor{gold!20}\textbf{0.5234}\scriptsize$\pm$0.15 & \cellcolor{bronze!20}0.9344\scriptsize$\pm$0.12 & \cellcolor{silver!20}0.0000\scriptsize$\pm$0.00 \\
    & Self-Cross Guidance & \cellcolor{bronze!20}0.6698\scriptsize$\pm$0.01 & \cellcolor{silver!20}0.6104\scriptsize$\pm$0.01 & \cellcolor{gold!20}\textbf{0.8972}\scriptsize$\pm$0.05 & \cellcolor{silver!20}0.8443\scriptsize$\pm$0.03 & \cellcolor{silver!20}0.5029\scriptsize$\pm$0.14 & \cellcolor{silver!20}0.9509\scriptsize$\pm$0.08 & \cellcolor{gold!20}\textbf{1.3660}\scriptsize$\pm$356.49 \\
    & FOCUS(Ours) & \cellcolor{gold!20}\textbf{0.6710}\scriptsize$\pm$0.01 & \cellcolor{gold!20}\textbf{0.6110}\scriptsize$\pm$0.01 & \cellcolor{silver!20}0.8960\scriptsize$\pm$0.06 & \cellcolor{gold!20}\textbf{0.8445}\scriptsize$\pm$0.03 & \cellcolor{bronze!20}0.4888\scriptsize$\pm$0.15 & \cellcolor{gold!20}\textbf{0.9510}\scriptsize$\pm$0.08 & \cellcolor{bronze!20}-20.1645\scriptsize$\pm$382.79 \\
    \midrule
    \multirow{3}{*}{\textbf{Object---Object}}  & Base & \cellcolor{silver!20}0.6754\scriptsize$\pm$0.01 & \cellcolor{bronze!20}0.6136\scriptsize$\pm$0.02 & \cellcolor{bronze!20}0.9003\scriptsize$\pm$0.06 & \cellcolor{silver!20}0.8519\scriptsize$\pm$0.04 & \cellcolor{gold!20}\textbf{0.5409}\scriptsize$\pm$0.16 & \cellcolor{bronze!20}0.9197\scriptsize$\pm$0.15 & \cellcolor{silver!20}0.0000\scriptsize$\pm$0.00 \\
    & Self-Cross Guidance & \cellcolor{bronze!20}0.6742\scriptsize$\pm$0.01 & \cellcolor{gold!20}\textbf{0.6149}\scriptsize$\pm$0.02 & \cellcolor{silver!20}0.9074\scriptsize$\pm$0.06 & \cellcolor{bronze!20}0.8518\scriptsize$\pm$0.04 & \cellcolor{bronze!20}0.4943\scriptsize$\pm$0.14 & \cellcolor{silver!20}0.9449\scriptsize$\pm$0.10 & \cellcolor{bronze!20}-23.8872\scriptsize$\pm$463.48 \\
    & FOCUS(Ours) & \cellcolor{gold!20}\textbf{0.6759}\scriptsize$\pm$0.01 & \cellcolor{silver!20}0.6147\scriptsize$\pm$0.02 & \cellcolor{gold!20}\textbf{0.9111}\scriptsize$\pm$0.06 & \cellcolor{gold!20}\textbf{0.8568}\scriptsize$\pm$0.03 & \cellcolor{silver!20}0.5152\scriptsize$\pm$0.14 & \cellcolor{gold!20}\textbf{0.9494}\scriptsize$\pm$0.08 & \cellcolor{gold!20}\textbf{35.6959}\scriptsize$\pm$459.01 \\
    \bottomrule
  \end{tabular}}
\end{table*}

\paragraph{Fine-tuning generalization.} We evaluate the two best SD~3.5 fine-tuned checkpoints trained on our data, and test them \emph{unchanged} on the SCG prompts, see \cref{tab:metrics_fine_scg}. Both checkpoints generalize and surpass the base model on most subsets. \textsc{FOCUS} underperforms the base model on Animal–Object, whereas SCG is weaker on Object–Object; however, on the general multi-subject splits (SSD-3/SSD-2) both fine-tuned models outperform their test-time counterparts, with SCG leading on Animal–Animal/Animal–Object and \textsc{FOCUS} leading on SSD-3/SSD-2/Object–Object. Additionally, this confirms that our fine-tuned method generalizes well beyond the training dataset, since both checkpoints where achieved with subsets of our own dataset.

\begin{figure}[thb]
    \centering
    \begin{subfigure}[t]{0.49\textwidth}
        {\setlength{\tabcolsep}{.1mm}%
        \resizebox{\linewidth}{!}{%
        \begin{tabular}{@{}cccc@{}}
            \textbf{Base SD 3.5} & \textbf{Self-Cross Guidance} & \textbf{FOCUS (Ours)}\\
            \TRIPLE{SD3-OTF-SCG}{0}\\\TRIPLE{SD3-OTF-SCG}{8}\\
        \end{tabular}}}
        \caption{Test-time control samples on Stable Diffusion 3.5 for the prompt \textit{``A black bear and a brown bear ambling along a riverbank''} and \textit{``A red car, a blue car, and a green car parked side by side on a city street''}.}
    \end{subfigure}
    \hfill
    \begin{subfigure}[t]{0.49\textwidth}
        {\setlength{\tabcolsep}{.1mm}%
        \resizebox{\linewidth}{!}{%
        \begin{tabular}{@{}cccc@{}}
            \textbf{Base FLUX.1} & \textbf{Self-Cross Guidance} & \textbf{FOCUS (Ours)}\\
            \TRIPLE{FLUX-OTF-SCG}{5}\\\TRIPLE{FLUX-OTF-SCG}{1}\\
        \end{tabular}}}
        \caption{Test-time control samples on FLUX.1 [dev] for the prompt \textit{``A fedora, a beanie, and a baseball cap hanging on a coat rack''} and \textit{``A swan, a goose, and a duck drifting past lily pads''}.}
    \end{subfigure}\\\vspace{2mm}
    \begin{subfigure}[t]{0.49\textwidth}
        {\setlength{\tabcolsep}{.1mm}%
        \resizebox{\linewidth}{!}{%
        \begin{tabular}{@{}cccc@{}}
            \textbf{Base SD 3.5} & \textbf{Self-Cross Guidance} & \textbf{FOCUS (Ours)}\\
            \TRIPLE{SD3-FINE-SCG}{7}\\\TRIPLE{SD3-FINE-SCG}{6}\\
        \end{tabular}}}
        \caption{Fine-tuned samples on Stable Diffusion 3.5 for the prompt \textit{``A Labrador, a Golden Retriever, and a German Shepherd playing in a backyard''} and \textit{``A jaguar and a leopard crouching in dense rainforest foliage''}.}
    \end{subfigure}
    \hfill
    \begin{subfigure}[t]{0.49\textwidth}
        {\setlength{\tabcolsep}{.1mm}%
        \resizebox{\linewidth}{!}{%
        \begin{tabular}{@{}cccc@{}}
            \textbf{Base FLUX.1} & \textbf{Self-Cross Guidance} & \textbf{FOCUS (Ours)}\\
            \TRIPLE{FLUX-FINE-SCG}{1}\\\TRIPLE{FLUX-FINE-SCG}{2}\\
        \end{tabular}}}
        \caption{Fine-tuned samples on FLUX.1 [dev] for the prompt \textit{``A macaw, a cockatoo, and an Amazon parrot perched on a jungle vine''} and \textit{``A red fox and an arctic fox sitting side by side in tall grass''}.}
    \end{subfigure}
  \caption{Qualitative comparison of Self–Cross Guidance (SCG) and \textsc{FOCUS}. Rows: test-time control (top) and fine-tuned models (bottom). Columns: Stable Diffusion~3.5 (left) and FLUX.1~[dev] (right).}
  \label{fig:otf_scg_comparison}
\end{figure}

\section{Extra Samples}
\subsection{Test-Time Control: Stable Diffusion 3.5}
\begin{figure}[H]
  \centering
  {\setlength{\tabcolsep}{.3mm}%
  \resizebox{0.98\linewidth}{!}{%
    \begin{tabular}{@{}ccccccc@{}}
        \textbf{Base} & \textbf{Attend\&Excite} & \textbf{CONFORM} & \textbf{Divide 
        \&Bind} & \textbf{Self-Cross Guidance} & \textbf{FOCUS (Ours)} \\
        \HEXAND{SD3-OTF}{0}\\
        \multicolumn{6}{c}{\itshape``A puffin and a penguin standing on a windswept shoreline''}\vspace{1mm}\\
        \HEXAND{SD3-OTF}{5}\\
        \multicolumn{6}{c}{\itshape``A fox, a lantern, and a teapot in a misty forest clearing''}\vspace{1mm}\\
        \HEXAND{SD3-OTF}{9}\\
        \multicolumn{6}{c}{\itshape``A jellyfish, a lighthouse, and a pocket watch suspended in seawater''}\vspace{1mm}\\
        \HEXAND{SD3-OTF}{7}\\
        \multicolumn{6}{c}{\itshape``A sailboat, a bicycle, and a stack of books beside a canal''}\vspace{1mm}\\
        \HEXAND{SD3-OTF}{14}\\
        \multicolumn{6}{c}{\itshape``A bluetit, a croissant, and a porcelain cup on a balcony rail''}\vspace{1mm}\\
        \HEXAND{SD3-OTF}{16}\\
        \multicolumn{5}{c}{\itshape``A violin, a raven, and a pocket watch on a stone windowsill''}\vspace{1mm}\\
    \end{tabular}%
  }}
  \caption{Stable Diffusion 3.5 samples with test-time control. All evaluated heuristics are shown at their optimal $\lambda$.}
  \label{fig:otf_sd3}
\end{figure}

\subsection{Test-Time Control: FLUX.1 [dev]}
\begin{figure}[H]
  \centering
  {\setlength{\tabcolsep}{.3mm}%
  \resizebox{0.98\linewidth}{!}{%
    \begin{tabular}{@{}ccccccc@{}}
      \textbf{Base} & \textbf{Attend\&Excite} & \textbf{CONFORM} & \textbf{Divide 
      \&Bind} & \textbf{Self-Cross Guidance} & \textbf{FOCUS (Ours)} \\
      \HEXAND{FLUX-OTF}{6}\\
      \multicolumn{6}{c}{\itshape``A chameleon, a wristwatch, and a paper crane on a mossy rock''}\vspace{1mm}\\
      \HEXAND{FLUX-OTF}{10}\\
      \multicolumn{6}{c}{\itshape``A peacock, a fountain pen, and a silk scarf on a marble table''}\vspace{1mm}\\
      \HEXAND{FLUX-OTF}{0}\\
      \multicolumn{6}{c}{\itshape``A hammerhead shark and a great white shark circling over a coral shelf''}\vspace{1mm}\\
      \HEXAND{FLUX-OTF}{9}\\
      \multicolumn{6}{c}{\itshape``A windmill, a picnic blanket, and a bicycle with a basket of tulips''}\vspace{1mm}\\
      \HEXAND{FLUX-OTF}{5}\\
      \multicolumn{6}{c}{\itshape``A quartz crystal, an amethyst, and a citrine displayed on black velvet''}\vspace{1mm}\\
      \HEXAND{FLUX-OTF}{3}\\
      \multicolumn{6}{c}{\itshape``A chef’s knife, a santoku, and a paring knife laid on a cutting board''}\vspace{1mm}\\
    \end{tabular}%
  }}
  \caption{FLUX.1 [dev] samples with test-time control. All evaluated heuristics are shown at their optimal $\lambda$.}
  \label{fig:otf_flux}
\end{figure}

\subsection{Test-Time Control: Stable Diffusion 1.5}
\begin{figure}[H]
  \centering
  {\setlength{\tabcolsep}{.3mm}%
  \resizebox*{!}{\dimexpr0.95\textheight-2\baselineskip\relax}{%
    \begin{tabular}{@{}ccccccc@{}}
      \textbf{Base} & \textbf{Attend\&Excite} & \textbf{CONFORM} & \textbf{Divide\&Bind} & \textbf{FOCUS (Ours)} \\
      \QUINTAND{SD1-OTF}{1}\\
      \multicolumn{5}{c}{\itshape``A red fox and an arctic fox sitting side by side in tall grass''}\vspace{1mm}\\
      \QUINTAND{SD1-OTF}{7}\\
      \multicolumn{5}{c}{\itshape``A snowboard, a telescope, and a husky on a snowy ridge''}\vspace{1mm}\\
      \QUINTAND{SD1-OTF}{0}\\
      \multicolumn{5}{c}{\itshape``A black bear and a brown bear ambling along a riverbank''}\vspace{1mm}\\
      \QUINTAND{SD1-OTF}{4}\\
      \multicolumn{5}{c}{\itshape``A barn owl, a snowy owl, and a great horned owl perched in a rustic loft''}\vspace{1mm}\\
      \QUINTAND{SD1-OTF}{8}\\
      \multicolumn{5}{c}{\itshape``A cello, a bonsai, and a ceramic teacup in a quiet teahouse''}\vspace{1mm}\\
      \QUINTAND{SD1-OTF}{3}\\
      \multicolumn{5}{c}{\itshape``A puffin and a penguin standing on a windswept shoreline''}\vspace{1mm}\\
    \end{tabular}%
  }}
  \caption{Stable Diffusion 1.5 samples with test-time control. All evaluated heuristics are shown at their optimal $\lambda$.}
  \label{fig:otf_sd1}
\end{figure}

\subsection{Fine-tuned: Stable Diffusion 3.5}
\begin{figure}[H]
  \centering
  {\setlength{\tabcolsep}{.3mm}%
  \resizebox{0.98\linewidth}{!}{%
    \begin{tabular}{@{}ccccccc@{}}
      \textbf{Base} & \textbf{Attend\&Excite} & \textbf{CONFORM} & \textbf{Divide 
      \&Bind} & \textbf{Self-Cross Guidance} & \textbf{FOCUS (Ours)} \\
      \HEXAND{SD3-FINE}{1}\\
      \multicolumn{6}{c}{\itshape``A Siberian Husky, an Alaskan Malamute, and a Samoyed trotting through fresh snow''}\vspace{1mm}\\
      \HEXAND{SD3-FINE}{15}\\
      \multicolumn{6}{c}{\itshape``A magician, a white rabbit, and a deck of cards on a velvet stage''}\vspace{1mm}\\
      \HEXAND{SD3-FINE}{22}\\
      \multicolumn{6}{c}{\itshape``A horse and a bear in a forest''}\vspace{1mm}\\
      \HEXAND{SD3-FINE}{0}\\
      \multicolumn{6}{c}{\itshape``A robin, a bluebird, and a warbler perched on a garden fence at dawn''}\vspace{1mm}\\
      \HEXAND{SD3-FINE}{16}\\
      \multicolumn{6}{c}{\itshape``A painter, a foxglove, and an easel by a cliffside path''}\vspace{1mm}\\
      \HEXAND{SD3-FINE}{21}\\
      \multicolumn{6}{c}{\itshape``A black cat, an orange cat, and a white cat lounging on a windowsill''}\vspace{1mm}\\
    \end{tabular}%
  }}
  \caption{Sample results from Stable Diffusion~3.5 fine-tuned with each heuristic. Prompts were not seen during training to evaluate generalization. All images are generated with identical settings and each heuristic is shown at its optimal trained $\lambda$.}
  \label{fig:fine_sd3}
\end{figure}

\subsection{Fine-tuned: FLUX.1 [dev]}
\begin{figure}[htb]
  \centering
  {\setlength{\tabcolsep}{.3mm}%
  \resizebox{0.98\linewidth}{!}{%
    \begin{tabular}{@{}ccccccc@{}}
      \textbf{Base} & \textbf{Attend\&Excite} & \textbf{CONFORM} & \textbf{Divide 
      \&Bind} & \textbf{Self-Cross Guidance} & \textbf{FOCUS (Ours)} \\
      \HEXAND{FLUX-FINE}{1}\\
      \multicolumn{6}{c}{\itshape``A red fox and an arctic fox sitting side by side in tall grass''}\vspace{1mm}\\
      \HEXAND{FLUX-FINE}{24}\\
      \multicolumn{6}{c}{\itshape``A mooncake, a teapot, and a jade rabbit under paper lanterns''}\vspace{1mm}\\
      \HEXAND{FLUX-FINE}{26}\\
      \multicolumn{6}{c}{\itshape``A lighthouse, a cello, and a red scarf beside crashing waves''}\vspace{1mm}\\
      \HEXAND{FLUX-FINE}{10}\\
      \multicolumn{6}{c}{\itshape``A lynx, a bobcat, and a cougar stepping across a rocky ledge''}\vspace{1mm}\\
      \HEXAND{FLUX-FINE}{21}\\
      \multicolumn{6}{c}{\itshape``A bluetit, a croissant, and a porcelain cup on a balcony rail''}\vspace{1mm}\\
      \HEXAND{FLUX-FINE}{23}\\
      \multicolumn{6}{c}{\itshape``A jellyfish, a seashell, and a glass bottle drifting in turquoise water''}\vspace{1mm}\\
    \end{tabular}%
  }}
  \caption{Sample results from FLUX.1 [dev] fine-tuned with each heuristic. Prompts were not seen during training to evaluate generalization. All images are generated with identical settings and each heuristic is shown at its optimal trained $\lambda$.}
  \label{fig:fine_flux}
\end{figure}

\section{Detailed Evaluation Results}\label{sec:detailed_results}
\subsection{Evaluation Results for Test-Time Control: Stable Diffusion 3.5}
\begin{center}
    \captionsetup{hypcap=false}
    \captionof{table}{Test-time control results for each heuristic on Stable Diffusion 3.5. We report mean $\pm$ std over all prompts and seeds. All methods use the same sampling and evaluation pipeline. Values are color-coded relative to the base case: entries close to the base value are white, improvements are shown in blue and degradations in red, with stronger color intensity indicating larger deviations from the base.}
    \vspace{-1.5mm}
    \label{tab:metrics_otf_sd3}
    \centering
    \resizebox{\textwidth}{!}{\renewcommand{\arraystretch}{0.95}\setlength{\tabcolsep}{1.5mm}\tableOtfSDIII}
\end{center}
\clearpage

\subsection{Evaluation Results for Test-Time Control: FLUX.1 [dev]}
\begin{center}
    \captionsetup{hypcap=false}
    \captionof{table}{Test-time control results for each heuristic on FLUX.1 [Dev]. We report mean $\pm$ std over all prompts and seeds. All methods use the same sampling and evaluation pipeline. Values are color-coded relative to the base case: entries close to the base value are white, improvements are shown in blue and degradations in red, with stronger color intensity indicating larger deviations from the base.}
    \vspace{-1.5mm}
    \label{tab:metrics_otf_flux}
    \centering
    \resizebox{\textwidth}{!}{\renewcommand{\arraystretch}{0.95}\setlength{\tabcolsep}{1.5mm}\tableOtfFLUX}
\end{center}
\clearpage

\subsection{Evaluation Results for Test-Time Control: Stable Diffusion 1.5}
\begin{center}
    \captionsetup{hypcap=false}
    \captionof{table}{Test-time control results for each heuristic on Stable Diffusion 1.5. We report mean $\pm$ std over all prompts and seeds. All methods use the same sampling and evaluation pipeline. Values are color-coded relative to the base case: entries close to the base value are white, improvements are shown in blue and degradations in red, with stronger color intensity indicating larger deviations from the base.}
    \vspace{-1.5mm}
    \label{tab:metrics_otf_sd1}
    \centering
    \resizebox{\textwidth}{!}{\renewcommand{\arraystretch}{0.95}\setlength{\tabcolsep}{1.5mm}\tableOtfSDI}
\end{center}
\clearpage

\subsection{Evaluation Results for Fine-Tuning: Stable Diffusion 3.5}
\begin{center}
    \captionsetup{hypcap=false}
    \captionof{table}{\textbf{Part I}: Fine-tuning results for each heuristic on Stable Diffusion 3.5 across different hyperparameter configurations. Here, $N$ denotes the number of prompts in the dataset, \textit{Lr} the learning rate, \textit{Lambda} the scalar applied to the heuristic function, and \textit{Ckpt.} the checkpoint used for evaluation. We report mean $\pm$ std over all prompts and seeds. All methods use the same sampling and evaluation pipeline. Values are color-coded relative to the base case: entries close to the base value are white, improvements are shown in blue and degradations in red, with stronger color intensity indicating larger deviations from the base.}
    \vspace{-1.5mm}
    \label{tab:metrics_fine_sd3_1}
    \centering
    \resizebox{\textwidth}{!}{\renewcommand{\arraystretch}{1.05}\setlength{\tabcolsep}{1.5mm}\tableFINESDI}
\end{center}

\begin{center}
    \captionsetup{hypcap=false}
    \captionof{table}{\textbf{Part II}: Fine-tuning results for each heuristic on Stable Diffusion 3.5 across different hyperparameter configurations. Here, $N$ denotes the number of prompts in the dataset, \textit{Lr} the learning rate, \textit{Lambda} the scalar applied to the heuristic function, and \textit{Ckpt.} the checkpoint used for evaluation. We report mean $\pm$ std over all prompts and seeds. All methods use the same sampling and evaluation pipeline. Values are color-coded relative to the base case: entries close to the base value are white, improvements are shown in blue and degradations in red, with stronger color intensity indicating larger deviations from the base.}
    \vspace{-1.5mm}
    \label{tab:metrics_fine_sd3_2}
    \centering
    \resizebox{\textwidth}{!}{\renewcommand{\arraystretch}{1.05}\setlength{\tabcolsep}{1.5mm}\tableFINESDII}
\end{center}

\begin{center}
    \captionsetup{hypcap=false}
    \captionof{table}{\textbf{Part III}: Fine-tuning results for each heuristic on Stable Diffusion 3.5 across different hyperparameter configurations. Here, $N$ denotes the number of prompts in the dataset, \textit{Lr} the learning rate, \textit{Lambda} the scalar applied to the heuristic function, and \textit{Ckpt.} the checkpoint used for evaluation. We report mean $\pm$ std over all prompts and seeds. All methods use the same sampling and evaluation pipeline. Values are color-coded relative to the base case: entries close to the base value are white, improvements are shown in blue and degradations in red, with stronger color intensity indicating larger deviations from the base.}
    \vspace{-1.5mm}
    \label{tab:metrics_fine_sd3_3}
    \centering
    \resizebox{\textwidth}{!}{\renewcommand{\arraystretch}{1.05}\setlength{\tabcolsep}{1.5mm}\tableFINESDIII}
\end{center}
\clearpage

\subsection{Evaluation Results for Fine-Tuning: FLUX.1 [dev]}
\begin{center}
    \captionsetup{hypcap=false}
    \captionof{table}{\textbf{Part I}: Fine-tuning results for each heuristic on FLUX.1 [dev] across different hyperparameter configurations. Here, $N$ denotes the number of prompts in the dataset, \textit{Lr} the learning rate, \textit{Lambda} the scalar applied to the heuristic function, and \textit{Ckpt.} the checkpoint used for evaluation. We report mean $\pm$ std over all prompts and seeds. All methods use the same sampling and evaluation pipeline. Values are color-coded relative to the base case: entries close to the base value are white, improvements are shown in blue and degradations in red, with stronger color intensity indicating larger deviations from the base.}
    \vspace{-1.5mm}
    \label{tab:metrics_fine_flux_1}
    \centering
    \resizebox{\textwidth}{!}{\renewcommand{\arraystretch}{1.05}\setlength{\tabcolsep}{1.5mm}\tableFINEFLUXI}
\end{center}

\begin{center}
    \captionsetup{hypcap=false}
    \captionof{table}{\textbf{Part II}: Fine-tuning results for each heuristic on FLUX.1 [dev] across different hyperparameter configurations. Here, $N$ denotes the number of prompts in the dataset, \textit{Lr} the learning rate, \textit{Lambda} the scalar applied to the heuristic function, and \textit{Ckpt.} the checkpoint used for evaluation. We report mean $\pm$ std over all prompts and seeds. All methods use the same sampling and evaluation pipeline. Values are color-coded relative to the base case: entries close to the base value are white, improvements are shown in blue and degradations in red, with stronger color intensity indicating larger deviations from the base.}
    \vspace{-1.5mm}
    \label{tab:metrics_fine_flux_2}
    \centering
    \resizebox{\textwidth}{!}{\renewcommand{\arraystretch}{1.05}\setlength{\tabcolsep}{1.5mm}\tableFINEFLUXII}
\end{center}

\begin{center}
    \captionsetup{hypcap=false}
    \captionof{table}{\textbf{Part III}: Fine-tuning results for each heuristic on FLUX.1 [dev] across different hyperparameter configurations. Here, $N$ denotes the number of prompts in the dataset, \textit{Lr} the learning rate, \textit{Lambda} the scalar applied to the heuristic function, and \textit{Ckpt.} the checkpoint used for evaluation. We report mean $\pm$ std over all prompts and seeds. All methods use the same sampling and evaluation pipeline. Values are color-coded relative to the base case: entries close to the base value are white, improvements are shown in blue and degradations in red, with stronger color intensity indicating larger deviations from the base.}
    \label{tab:metrics_fine_flux_3}
    \vspace{-1.5mm}
    \resizebox{\textwidth}{!}{\renewcommand{\arraystretch}{1.05}\setlength{\tabcolsep}{1.5mm}\tableFINEFLUXIII}
\end{center}

\end{document}